\documentclass[conference]{IEEEtran}
\IEEEoverridecommandlockouts

\usepackage[most]{tcolorbox}
\usepackage{amsmath}
\usepackage{amssymb}
\usepackage{algorithm}
\usepackage{algorithmic}
\usepackage{graphicx}
\usepackage{bibarts}
\usepackage{cite}
\usepackage{amsthm}
\usepackage{subfigure}
\usepackage{diagbox}
\usepackage{caption}
\usepackage{color}
\usepackage{bbm}
\usepackage{dsfont}
\usepackage[font=small, labelfont=bf, labelsep=period]{caption}

\title{\huge Optimizing Model Splitting and Device Task Assignment for Deceptive Signal Assisted Private Multi-hop Split Learning  \vspace{-0.3cm}}

\author{
    \IEEEauthorblockN{
        Dongyu Wei, Xiaoren Xu, Yuchen Liu, \IEEEmembership{Member, IEEE},\\
        H. Vincent Poor, \IEEEmembership{Life Fellow, IEEE}, and Mingzhe Chen, \IEEEmembership{Senior Member, IEEE}
        \thanks{Dongyu Wei, Xiaoren Xu, and Mingzhe Chen are with the Department of Electrical and Computer Engineering, University of Miami, Coral Gables, FL, 33146, USA (e-mail: {dongyu.wei, xiaoren.xu, mingzhe.chen}@miami.edu).}%
        \thanks{Yuchen Liu is with the Department of Computer Science, NC State University, Raleigh, NC, 27695, USA (e-mail: yuchen.liu@ncsu.edu).}%
        \thanks{H. Vincent Poor is with the Department of Electrical and Computer
        Engineering, Princeton University, Princeton, NJ, 08544, USA (e-mail:
        poor@princeton.edu).}
        \thanks{Mingzhe Chen is also with the Frost Institute for Data Science and Computing, University of Miami, Coral Gables, FL 33146 USA.}
        \vspace{-0.7cm}
    }
}

\begin{document}
\theoremstyle{definition}
\newtheorem{theorem}{Theorem}
\newtheorem{proposition}{Proposition}
\newtheorem{lemma}{Lemma}
\newtheorem{corollary}{Corollary}

\maketitle
\vspace{-1em}

\begingroup
\renewcommand\thefootnote{}\footnote{
This paper was presented in part at the 2025 Annual Conference on Information Sciences and Systems (CISS) as the paper \cite{10944721}.
}
\addtocounter{footnote}{-1}
\endgroup
\begin{abstract}
In this paper, deceptive signal-assisted private split learning is investigated. In our model, several edge devices jointly perform collaborative training, and some eavesdroppers aim to collect the model and data information from devices. To prevent the eavesdroppers from collecting model and data information, a subset of devices can transmit deceptive signals. Therefore, it is necessary to determine the subset of devices used for deceptive signal transmission, the subset of model training devices, and the models assigned to each model training device. This problem is formulated as an optimization problem whose goal is to minimize the information leaked to eavesdroppers while meeting the model training energy consumption and delay constraints.  
To solve this problem, we propose a soft actor-critic deep reinforcement learning framework with intrinsic curiosity module and cross-attention (ICM-CA) that enables a centralized agent to determine the model training devices, the deceptive signal transmission devices, the transmit power, and sub-models assigned to each model training device without knowing the position and monitoring probability of eavesdroppers. The proposed method uses an ICM module to encourage the server to explore novel actions and states and a CA module to determine the importance of each historical state-action pair thus improving training efficiency.
Simulation results demonstrate that the proposed method improves the convergence rate by up to $3 \times$ and reduces the information leaked to eavesdroppers by up to $13 \%$ compared to the traditional SAC algorithm.
\end{abstract}

\begin{IEEEkeywords}
split learning, intrinsic curiosity module, cross-attention, reinforcement learning
\end{IEEEkeywords}

\section{Introduction}\label{Introduction}
Federated learning (FL) has emerged as a prominent approach for privacy-enhancing and distributed model training \cite{chen2020joint, 10960614}. However, FL requires each edge device used in distributed training to transmit its entire locally trained model (e.g., gradient or model parameters) to a central server for model aggregation, which may not be practical due to limited computational resources, energy and communication constraints\cite{chen2021distributed}. To address this limitation, split learning (SL), which partitions a global model into multiple sub-models and assigns them to edge devices for collaborative training, has been proposed. 
However, deploying SL in wireless networks introduces several critical challenges. First, determining the optimal partitioning of the global model into sub-models for each edge device is non-trivial, as it must account for the heterogeneous computational capacities, energy consumption constraints, and communication capabilities of devices. Moreover, although model partitions across different devices in SL can reduce the likelihood of potential eavesdroppers obtaining the complete models, the frequent device-to-device model information exchange introduces new security and privacy risks, as eavesdroppers can exploit these transmissions to infer model information \cite{10320405, 10700751, 10516589}. 

A number of existing works \cite{xu2023accelerating,liu2022wireless,lin2024efficient,wu2023split,yang2023over,kim2024splitmac} have studied the optimization of model partitioning, communication efficiency, and resource management for SL in wireless networks. In particular, the authors in \cite{xu2023accelerating} investigated the combination of SL and FL to optimize model partitioning and bandwidth allocation, thus reducing training delay in wireless networks. Similarly, the work in \cite{liu2022wireless} proposed a hybrid approach using which model updates are selectively transmitted based on channel conditions and computational constraints, thus balancing training efficiency and communication cost. The work in \cite{lin2024efficient} introduced a parallel SL framework that enhances training efficiency by leveraging parallel client updates while optimizing communication overhead through dynamic device selection and bandwidth allocation. In \cite{wu2023split}, the authors proposed a resource-aware SL framework that clusters devices based on their computing capabilities and optimizes the cut layer selection and wireless resource allocation to minimize overall training time. The study in \cite{yang2023over} explored the integration of SL with over-the-air computation (OAC) in multiple-input multiple-output (MIMO) networks to enhance communication efficiency and reduce model transmission delay, thereby improving overall training speed. The work in \cite{kim2024splitmac} focused on multi-device SL over multiple access channels, and introduced a communication scheme that allows simultaneous transmission of intermediate features to improve communication efficiency and bandwidth utilization. 
However, most of these works \cite{xu2023accelerating, lin2024efficient, wu2023split,yang2023over} assumed a fixed model partitioning scheme throughout the training process, without considering dynamic model splitting based on device computational capabilities and network conditions. Meanwhile, these works \cite{xu2023accelerating, lin2024efficient, wu2023split, liu2022wireless, yang2023over, kim2024splitmac} primarily considered a single-device-to-server transmission model, and hence, the designed schemes cannot be applied for multi-device SL frameworks. Finally, none of the existing works \cite{xu2023accelerating, liu2022wireless, lin2024efficient, wu2023split, yang2023over, kim2024splitmac} considered the risk of SL model eavesdropping during training, which poses a significant security threat as intermediate model features can be intercepted and exploited by adversaries.

Several existing works~\cite{al2022toward,lu2019blockchain,zhao2020privacy,lu2024split,weng2019deepchain,fang2019entrapment} have studied the design of privacy-preserving or secure distributed learning algorithms. In particular, the authors in \cite{al2022toward} used a deep reinforcement learning (RL) algorithm to dynamically identify and exclude unreliable devices in FL, so as to improve FL training robustness under adversarial threats. The work in\cite{lu2019blockchain} developed a blockchain-assisted FL framework to protect industrial IoT data by designing a secure and verifiable model aggregation scheme. The work in \cite{zhao2020privacy} introduced a privacy-preserving FL scheme that uses blockchain and homomorphic encryption to prevent information leakage during model transmission. In~\cite{lu2024split}, a split aggregation protocol was proposed to enhance Byzantine resistance in the communication process of FL, while maintaining low communication overhead and ensuring the transmitted model segments remain unchanged by attackers. The work in~\cite{weng2019deepchain} used blockchain to design a reward mechanism that can support privacy-preserving FL training. The work in~\cite{fang2019entrapment} used a deception-based defense mechanism that creates entrapment zones to confuse eavesdroppers and reduce the likelihood of successful data interception over wireless channels.
However, most of these works either focused on protecting the model aggregation process~\cite{lu2019blockchain,lu2024split}, or on preventing untrusted devices from uploading local updates~\cite{al2022toward,weng2019deepchain}. These defense methods are passive methods that exclude untrusted devices or their updates only when the parameter server detects suspicious or malicious information. The work in~\cite{fang2019entrapment} designed a proactive physical-layer deception strategy to confuse wireless eavesdroppers. However, it focused on a scenario where the entire model is transmitted by one model training device, and did not consider how to split the machine learning model and assign them to multiple edge devices according to data and model leakage risks, which can significantly reduce overall amount of information leaked to eavesdroppers since different ML model layers have different data leakage risks~\cite{burda2019large}. 

The main contribution of this work is a novel secure and efficient multi-hop split learning (MHSL) framework, which enables several devices to collaboratively train a large sized ML model with minimum model leakage risks, as well as limited energy and latency. The key contributions include the following:
\begin{enumerate}
    \item We consider a multi-hop split learning framework where a subset of devices is selected to jointly perform collaborative training, and another subset of devices is selected to transmit deceptive signals to prevent eavesdroppers from intercepting model training information. Due to limited energy and wireless spectrum, the number of selected model training and deceptive signal transmission devices is limited. Hence, it is necessary to jointly optimize ML model splitting and assignment, device task assignment (i.e., model training or deceptive signal transmission), and transmit power allocation. This problem is formulated as an optimization problem whose goal is to minimize expected amount of information leaked to eavesdroppers while meeting latency and energy requirements. 
    
    \item To address this problem, we propose a soft actor-critic (SAC) deep reinforcement learning framework with intrinsic curiosity module and cross-attention (ICM-CA). The designed method enables a central controller to select training and deceptive devices, split and assign the ML model, and determine transmit powers under energy and latency constraints without knowing the position and monitoring probability of eavesdroppers. Compared to standard RL methods \cite{10571594,10778255}, the proposed method leverages: 1) an intrinsic curiosity module to encourage the server to explore novel actions and states, thus discovering better policies that can further reduce the amount of information leaked to eavesdroppers, and 2) a cross-attention module that determines the importance of each historical state-action pair such that the designed RL method can focus more on the important state-action pairs, thus improving learning efficiency. 
    
    \item We analyze how the transmit powers of model training and deceptive signal transmission devices affect the amount of information leaked to eavesdroppers. We derive the optimal transmit powers that minimize the expected amount of information leaked to eavesdroppers under time and energy constraints. Analytical results show that both reducing the model transmit power and increasing the deceptive signal power will reduce the amount of information leaked to eavesdroppers.
\end{enumerate}
Simulation results show that the proposed ICM-CA approach improves the convergence rate by $3 \times$ and reduces the amount of information leaked to eavesdroppers by up to $13\%$ compared to the SAC algorithm.


\section{System Model and Problem Formulation}\label{Proposed_Clustered_FL_System}

\begin{figure}[tp]
    \centering
\includegraphics[width=.5\textwidth]{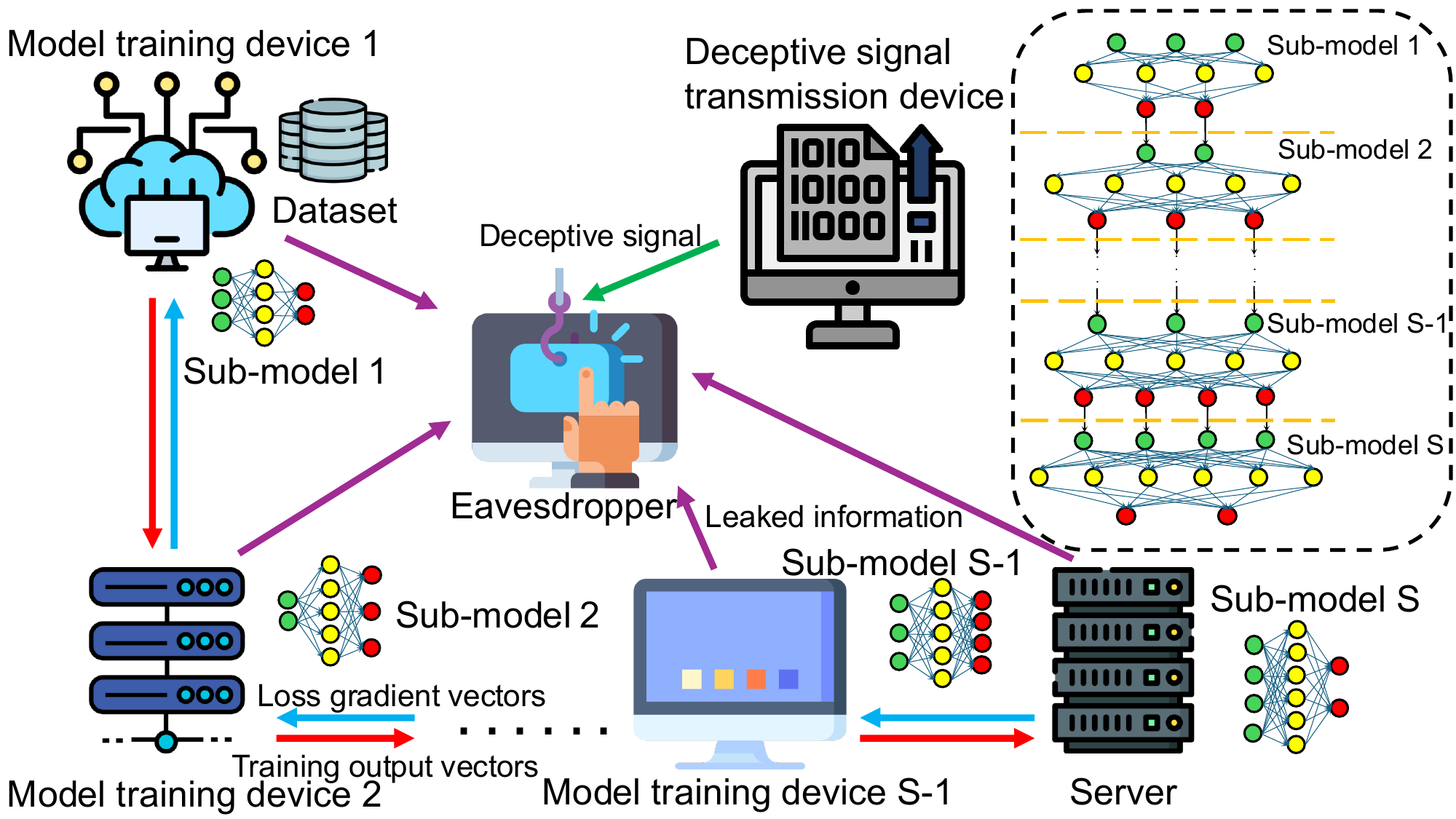}
\vspace{-0.2cm}
    \caption{The architecture of the considered MHSL model.}
    \label{fig:intro}
      \vspace{-0.7cm}
\end{figure}

Consider a distributed learning system where a set $\mathcal{U}$ of $U$ devices and a server cooperatively perform an MHSL algorithm, under the monitoring of a set $\mathcal{E}$ of $E$ eavesdroppers attempting to eavesdrop the ML and data information transmitted by the devices, as shown in Fig. \ref{fig:intro}. In our model, a subset $\mathcal{S}$ of $S-1$ devices will be selected to participate in the training of a global model parameterized by $\boldsymbol{\Theta}_{\textrm{G}}$. Specifically, the model $\boldsymbol{\Theta}_{\textrm{G}}$ will be divided into $S$ sub-models. These sub-models will be assigned to the selected $S-1$ devices and the server for model training. 
To reduce the data or model parameter leakage caused by eavesdroppers, the server will also select a subset of devices to send deceptive signals to the eavesdroppers such that the eavesdroppers will consider the deceptive signals as the useful information from devices, thus ignoring data or model information from the selected devices \cite{fang2019entrapment}. Our goal is to minimize the transmission delay and energy cost of model training while mitigating the data leakage caused by eavesdroppers.
Next, we introduce the considered MHSL algorithm and the corresponding training process. Then, we show the proposed attack model. Finally, we formulate our optimization problem.

\subsection{General Procedure of the MHSL Algorithm}
The general training process of the considered MHSL is summarized as follows:

    \textbf{1) Device Selection and Model Partition:} 
    The global model $\boldsymbol{\Theta}_{\textrm{G}}$ is partitioned into $S$ sequential sub-models $\boldsymbol{\Theta}_{\textrm{G}}=\left[\boldsymbol{\theta}_1, \boldsymbol{\theta}_2, \ldots, \boldsymbol{\theta}_S\right]$, where $\boldsymbol{\theta}_1$ is the starting segment of $\boldsymbol{\Theta}_{\textrm{G}}$, followed by $\boldsymbol{\theta}_2, \ldots, \boldsymbol{\theta}_S$. 
    The server will first determine the value of $S$. Then, the server will select $S-1$ devices and assigns the sub-models $\boldsymbol{\theta}_1, \boldsymbol{\theta}_2, \ldots, \boldsymbol{\theta}_{S-1}$ to the devices.

    \textbf{2) Loss Calculation:} 
    We assume that the device with model $\boldsymbol{\theta}_k$ is $s_k$. The device $s_1$ with model $\boldsymbol{\theta}_1$ needs to use its data to calculate its output. Other devices such as $s_k$ with model $\boldsymbol{\theta}_k$ will use the output of $s_{k-1}$ to calculate its output and send the output to $s_{k+1}$. Finally, the server with model $\boldsymbol{\theta}_S$ receives the output from $s_{S-1}$, and calculates the output and loss value of the entire model $\boldsymbol{\Theta}_{\textrm{G}}$.

    \textbf{3) Backward Propagation:} 
    The server will calculate the gradient vector of the model using the loss value and transmit it back to the selected devices. The transmission order is from the server to device $s_{S-1}$, and then from device $s_{S-1}$ to $s_1$.

    \textbf{4) Deceptive Signal Transmission:} 
    For each transmission of a gradient vector or an output of the sub-model, a subset of devices in $\mathcal{U}$ excluding the transmitter and receiver are selected to transmit deceptive signals.

    \textbf{5)}
    Repeat steps 2-4 until $\boldsymbol{\Theta}_{\textrm{G}}$ converges.

\subsection{Proposed MHSL Algorithm}
Let $\mathcal{X}_{s_1}$ be a set of $X_{s_1}$ data samples of device $s_1$. Next, we introduce the training process of the MHSL algorithm per iteration mathematically. 

    \textbf{1)Forward Propagation:} 
    Let $f_k(\cdot;\boldsymbol{\theta}_k)$ be the function implemented by user $s_k$. The output $\boldsymbol{z}_k$ of device $s_k$ at each iteration is
    \begin{equation}\label{eq:zk_all}
        \boldsymbol{z}_k = f_k\left(\boldsymbol{z}_{k-1}; \boldsymbol{\theta}_k\right), k = 1,\ldots,S,
    \end{equation}
    where $\boldsymbol{z}_0 = \mathcal{X}_{s_1}$, $\mathcal{X}_{s_1}$ is a mini-batch of $\mathcal{X}_{s_1}$, and $\boldsymbol{z}_{k-1}$ is the output of device $s_{k-1}$.
    From (\ref{eq:zk_all}), we see that device $s_1$ needs to use its data to calculate the output, while any other device $s_k$ uses the output from $s_{k-1}$ to calculate its output. 

    \textbf{2) Back Propagation and Model Update:} 
    Let $\boldsymbol{x}_{i,j}$ be the input vector of each data sample $j$ of device $i$, and $y_{i,j}$ be the output label in $\mathcal{X}_{s_i}$.
    Hence, the server 
    computes the loss as 
    \begin{equation} \label{eq:loss_server}
        \mathcal{L} = \sum_{k \in \mathcal{X}_{s_1}}l\left(\boldsymbol{z}_{S}, \boldsymbol{x}_{1, k}, y_{1, k}\right),
    \end{equation}
    where $l\left(\boldsymbol{z}_{S}, \boldsymbol{x}_{1, k}, y_{1, k}\right)$ is the loss value of $\boldsymbol{z}_{S}$ with respect to $\boldsymbol{x}_{1, k}$ and $y_{1, k}$.
    Given the loss value in (\ref{eq:loss_server}), each device or the server must compute the gradient $\frac{\partial \mathcal{L}}{\partial \boldsymbol{z}_k}$ with respect to the output $\boldsymbol{z}_{k-1}$ from device $s_{k}$, and the gradient $\frac{\partial \mathcal{L}}{\partial \boldsymbol{\theta}_k}$ with respect to the local parameter $\boldsymbol{\theta}_{k}$. These gradient vectors will be transmitted back to the selected devices in $\mathcal{S}$ for their model updates.
    The equation of calculating $\frac{\partial \mathcal{L}}{\partial \boldsymbol{\theta}_k}$ is
    \begin{equation} \label{eq:gradient_theta}
        \frac{\partial \mathcal{L}}{\partial \boldsymbol{\theta}_k} = \frac{\partial \mathcal{L}}{\partial \boldsymbol{z}_k} \times \nabla_{\boldsymbol{\theta}_k} f_k\left(\boldsymbol{z}_{k-1}; \boldsymbol{\theta}_k\right), k= 1,\ldots,S,
    \end{equation}
    where \( \nabla_{\boldsymbol{\theta}_k} f_k\left(\boldsymbol{z}_{k-1}; \boldsymbol{\theta}_k\right) \) is the gradient of the function \( f_k \) with respect to the parameter \( \boldsymbol{\theta}_k \). Next, each device $s_k$ or the server computes its gradient $\frac{\partial \mathcal{L}}{\partial \boldsymbol{z}_{k-1}}$, which is given by
    \begin{equation} \label{eq:gradient_send}
        \frac{\partial \mathcal{L}}{\partial \boldsymbol{z}_{k-1}}\!\! =\!\! \frac{\partial \mathcal{L}}{\partial \boldsymbol{z}_k}\!\! \times \!\! \nabla_{\boldsymbol{z}_{k-1}} f_k\left(\boldsymbol{z}_{k-1}; \boldsymbol{\theta}_k\right), k= 2,\ldots,S.
    \end{equation}
    $\frac{\partial \mathcal{L}}{\partial \boldsymbol{z}_{k-1}}$ will be transmitted to device $s_{k-1}$ to calculate $\frac{\partial \mathcal{L}}{\partial \boldsymbol{\boldsymbol{\theta}}_{k-1}}$ as done in (\ref{eq:gradient_theta}).
    Then, by using $\frac{\partial \mathcal{L}}{\partial \boldsymbol{\theta}_k}$, each device $s_k$ or the server updates its sub-model $\boldsymbol{\theta}_k$ based on a gradient descent method, as 
        $\boldsymbol{\theta}_k \!\leftarrow\! \boldsymbol{\theta}_k - \eta \frac{\partial \mathcal{L}}{\partial \boldsymbol{\theta}_k}, k=1,\ldots,S$,
    where \( \eta \) is the learning rate.
    
\subsection{Data Transmission Model}
In our considered network, the time
division multiple access (TDMA) protocol is used to transmit model training information or send deceptive signals \cite{anwar2015tdma}. The data rate of device $s_i$ sending information to device $s_j$ is
\begin{equation} \label{eq:data_rate}
    c_{s_i,s_j}\!\!=\!\!B\log_2\left(1+\frac{p_{s_i}^{\left(s_i,s_j\right)}h_{s_i,s_j}}{\sum_{d \in \mathcal{D}_{s_i,s_j}}p_{d}^{\left(s_i,s_j\right)}h_{d,s_j}+BN_0}\right),
\end{equation}
where $B$ is the bandwidth; $p_{s_i}^{\left(s_i,s_j\right)}$ is the transmit power of device $s_i$ when transmitting information to device $s_j$; $h_{s_i,s_j} = om_{s_i,s_j}^{-2}$ is the channel gain between device $s_i$ and $s_j$, with $o$ being the Rayleigh fading parameter, $m_{s_i,s_j}$ being the distance between device $s_i$ and $s_j$; $\mathcal{D}_{s_i,s_j}$ is a subset of devices in $\left(\mathcal{U} \setminus \left\{s_i,s_j\right\}\right)$ that will transmit deceptive signals when device $s_i$ transmits information to device $s_j$; $\sum_{d \in \mathcal{D}_{s_i,s_j}}p_{d}^{\left(s_i,s_j\right)}h_{d,s_j}$ is the interference introduced by the devices in $\mathcal{D}_{s_i,s_j}$ that transmit deceptive signals; $BN_0$ is the additive white Gaussian noise (AWGN), with $N_0$ representing the power spectral density \cite{liu2012additive}.

\subsection{Time Consumption Model}
The time that the devices use for model training consists of four parts: 1) the delay of model output transmission, 2) the delay of gradient vector transmission, 3) the delay on model output calculation, and 4) the delay on gradient update, which are specified as follows.
\subsubsection{Delay of model output transmission} Given (\ref{eq:data_rate}), the time that device $s_k$ to transmits its model $\boldsymbol{z}_k$ to device $s_{k+1}$ is
\begin{equation} \label{eq:trans_time_forward}
    T_{s_k,s_{k+1}}^{\textrm{S}}\left(\boldsymbol{\theta}_k, \boldsymbol{\theta}_{k+1}\right) = \frac{\Gamma\left(\boldsymbol{z}_k\right)}{c_{s_k, s_{k+1}}}, k=1,\ldots,S-1,
\end{equation}
where $\Gamma\left(\boldsymbol{z}_k\right)$ is the size of output $\boldsymbol{z}_k$. 
Since the sub-model output is transmitted from device $s_1$ to the server, $k\!=\!1,\ldots,S-1$.

\subsubsection{Delay of gradient vector transmission}
The time required for device $s_k$ to transmit $\frac{\partial \mathcal{L}}{\partial \boldsymbol{z}_{k-1}}$ to device $s_{k-1}$ is
\begin{equation} \label{eq:trans_time_back}
    T_{s_k,s_{k-1}}^{\textrm{G}}\left(\boldsymbol{\theta}_k, \boldsymbol{\theta}_{k-1}\right)= \frac{\Gamma\left(\frac{\partial \mathcal{L}}{\partial \boldsymbol{z}_{k-1}}\right)}{c_{s_k, s_{k-1}}}, k=2, \ldots, S.
\end{equation}
In (\ref{eq:trans_time_back}), $k=2,\ldots,S$ stems from the fact that the gradient vector is transmitted from the server to device $s_1$.

\subsubsection{Device computing model}
The time that each device $s_k$ calculates its output $\boldsymbol{z}_k$ is
\begin{equation} \label{eq:proc_time_forward}
    T_{s_k}^{\textrm{F}}\left(\boldsymbol{\theta}_k\right)=\frac{\omega^{\textrm{B}}\lambda_f^{\boldsymbol{\theta}_k}\Gamma\left(\boldsymbol{z}_k\right)\Gamma\left(\boldsymbol{\theta}_k\right)}{f^{\textrm{B}}}, k=1,\ldots,S,
\end{equation}
where $f^{\textrm{B}}$ is the frequency of the central processing unit (CPU) clock of each device $s_k$, which is assumed to be equal for all devices; $\omega^{\textrm{B}}$ is the number of CPU cycles required for computing data (per bit); $\lambda_f^{\boldsymbol{\theta}_k}$ is the coefficient determined by the complexity of the model $\boldsymbol{\theta}_k$ in model training.
Similarly, the time that device $s_k$ updates its model $\boldsymbol{\theta}_k$ is
\begin{equation} \label{eq:proc_time_back}
    T_{s_k}^{\textrm{B}}\left(\boldsymbol{\theta}_k\right)=\frac{\omega^{\textrm{B}}\lambda_b^{\boldsymbol{\theta}_k}\Gamma\left(\frac{\partial \mathcal{L}}{\partial \boldsymbol{z}_{k-1}}\right)\Gamma\left(\boldsymbol{\theta}_k\right)}{f^{\textrm{B}}}, k=1,\ldots,S,
\end{equation}
where $\lambda_b^{\boldsymbol{\theta}_k}$ is the coefficient related to the complexity of the model $\boldsymbol{\theta}_k$ in model updating.

Given \eqref{eq:proc_time_forward} and \eqref{eq:proc_time_back}, 
the total delay of updating the model per iteration is 
\begin{equation} \label{eq:tot_time}
\begin{aligned}
    &T_{\textrm{tot}}\left(\boldsymbol{\theta}_1,\ldots, \boldsymbol{\theta}_S \right) \\ &= \sum_{k=1}^{S-1}T_{s_k,s_{k+1}}^{\textrm{S}}\left(\boldsymbol{\theta}_k, \boldsymbol{\theta}_{k+1}\right) + \sum_{k=2}^{S}T_{s_k,s_{k-1}}^{\textrm{G}}\left(\boldsymbol{\theta}_k, \boldsymbol{\theta}_{k-1}\right) \\
    & \quad + \sum_{k=1}^{S}T_{s_k}^{\textrm{F}}\left(\boldsymbol{\theta}_k\right)+\sum_{k=1}^{S}T_{s_k}^{\textrm{B}}\left(\boldsymbol{\theta}_k\right),
\end{aligned}
\end{equation}
where $\sum_{k=1}^{S-1}T_{s_k,s_{k+1}}^S\left(\boldsymbol{\theta}_k, \boldsymbol{\theta}_{k+1}\right)$ is the delay of sub-model output transmitted from device $s_1$ to the server; $\sum_{k=2}^{S}T_{s_k,s_{k-1}}^{\textrm{G}}\left(\boldsymbol{\theta}_k, \boldsymbol{\theta}_{k-1}\right)$ is the delay of gradient vector transmitted from the server to device $s_1$; $\sum_{k=1}^{S}T_{s_k}^{\textrm{F}}\left(\boldsymbol{\theta}_k\right)$ is the consumption time of all devices in $\mathcal{D}$ that compute their models; $\sum_{k=1}^{S}T_{s_k}^{\textrm{B}}\left(\boldsymbol{\theta}_k\right)$ is the consumption time of these devices updating their gradients.

\subsection{Energy Consumption Model}
The energy consumption of model training consists of two components: a) data transmission and b) data computing. Then, the total energy consumption per iteration is
\begin{equation} \label{eq:energy_tot}
\begin{aligned}
    &E_{tot}\left(\boldsymbol{\theta}_1,\ldots,\boldsymbol{\theta}_S\right) \\ & = \sum_{k=1}^S\left(\vartheta_k\left(f^\textrm{B}\right)^2\left(\lambda_f^{\boldsymbol{\theta}_k}G\left(\boldsymbol{\theta}_k\right)+\lambda_b^{\boldsymbol{\theta}_k}G\left(\boldsymbol{\theta}_k\right) \right)\right) \\
    & \ +\!\! \sum_{k=1}^{S-1}\!\!\left(\!\!p_{s_k}^{\left(s_k,s_{k+1}\right)}\!\! +\! \!\!\!\!\!\!\!\!\!\sum_{d \in \mathcal{D}_{s_k,s_{k+1}}}\!\!\!\!\!p_d^{\left(s_k,s_{k+1}\right)}\!\!\right)\!\!T_{s_k, s_{k+1}}^{\textrm{S}}\!\!\!\left(\boldsymbol{\theta}_k, \boldsymbol{\theta}_{k+1}\right) \\
    & \ +\!\! \sum_{k=2}^{S}\!\!\left(\!\!p_{s_k}^{\left(s_k,s_{k-1}\right)} \!\!+ \!\!\!\!\!\!\!\!\!\sum_{d \in \mathcal{D}_{s_k,s_{k-1}}}\!\!\!\!\!p_d^{\left(s_k,s_{k-1}\right)}\!\!\right)\!\!T_{s_k,s_{k-1}}^{\textrm{G}}\!\!\!\left(\boldsymbol{\theta}_k, \boldsymbol{\theta}_{k-1}\right),
\end{aligned}
\end{equation}
where $\vartheta_k$ is the energy consumption coefficient depending on the chip of device $s_k$; $G\left(\boldsymbol{\theta}_k\right)$ is the size of model $\boldsymbol{\theta}_k$. In (\ref{eq:energy_tot}), the first term is the sum of the energy consumption of each device computing and updating its model; $\sum_{k=1}^{S-1}p_{s_k}^{\left(s_k,s_{k+1}\right)}T_{s_k,s_{k+1}}^{\textrm{S}}\left(\boldsymbol{\theta}_k,\boldsymbol{\theta}_{k+1}\right)$ represents the energy consumption of transmitting sub-model output from device $s_1$ to the server, and $ \sum_{k=2}^{S}p_{s_k}^{\left(s_k,s_{k-1}\right)}T_{s_k,s_k-1}^{\textrm{G}}\left(\boldsymbol{\theta}_k,\boldsymbol{\theta}_{k-1}\right)$ represents the energy consumption of transmitting the gradient vector from the server to device $s_1$; $\sum_{d \in \mathcal{D}_{s_i,s_{j}}}\!\!\!p_d^{\left(s_i,s_j\right)}T_{s_i,s_j}^{\textrm{S}}\!\!\left(\boldsymbol{\theta}_i,\boldsymbol{\theta}_j\right)$ is the energy consumption of the devices that transmit deceptive signals when device $s_i$ sends information to device $s_j$.

\subsection{Eavesdropper Model}
Each eavesdropper $e \in \mathcal{E}$ monitors the model information transmitted by the devices in $\mathcal{S}$. Each eavesdropper $e$ will receive several signals that are transmitted by one device in $\mathcal{S}$ that transmits actual model information and a subset of devices in $\mathcal{D}$ that transmit deceptive signals. Here, we assume that each eavesdropper will only consider the signal with highest SNR as the useful information \cite{10547445}. Then, when device $s_i$ sends information to device $s_j$, the signal of the device in $\mathcal{U}$ captured by eavesdropper $e$ is
\begin{equation} \label{eq:arg_snr}
    \epsilon_{s_i,s_j}(e) = \arg \max_{k \in \left(\left\{s_i\right\} \cup \mathcal{D}_{s_i,s_j}\right)} \frac{p_k h_{k,e}}{BN_0},
\end{equation}
where $p_k$ is the transmit power of device $k$. Here, the eavesdropper either captures the deceptive signals or model information (output or gradient vector) which depends on the SNR values of the signals.
We assume that the useful information that an eavesdropper infers from device $s_i$ that transmits $\boldsymbol{\theta}_i$ to $s_j$ is $\delta_{s_i,s_j}\left(\boldsymbol{\theta}_i\right)$. 
Let $q_e^{\left(s_i,s_j\right)}$ be the probability that eavesdropper $e$ monitors the model information transmitted by device $s_i$. 
Then, the amount of information leaked to all eavesdroppers per iteration is
\begin{equation} 
\begin{aligned}
    &\mathbb{E}\left[I\left(\boldsymbol{\theta}_1,\ldots,\boldsymbol{\theta}_S\right)\right]\\&=\sum_{e \in \mathcal{E}}\left(\sum_{k=1}^{S-1}\mathds{1}_{\left\{\epsilon_{s_k,s_{k+1}}\left(e\right) = s_k\right\}}q_e^{\left(s_k,s_{k+1}\right)}\delta_{s_k,s_{k+1}}\!\!\left(\boldsymbol{\theta}_k\right)\right.
    \\
    &\quad \left. + \sum_{k=2}^{S}\mathds{1}_{\left\{\epsilon_{s_k,s_{k-1}}\left(e\right) = s_k\right\}}q_e^{\left(s_k,s_{k-1}\right)}\delta_{s_k,s_{k-1}}\!\!\left(\boldsymbol{\theta}_k\right)\right),
\end{aligned}
\end{equation}
where $\mathds{1}_{\left\{\epsilon_{s_k,s_{k+1}}\left(e\right) = s_k\right\}}$ is the function that indicates whether eavesdropper $e$ recognizes its identity as the model training device $s_k$. Specifically, $\mathds{1}_{\left\{\epsilon_{s_k,s_{k+1}}\left(e\right) = s_k\right\}} = 1$ implies that the deceptive signals fail to confuse the eavesdropper, which means the eavesdropper still intercepts the training model information. Otherwise, we have $\mathds{1}_{\left\{\epsilon_{s_k,s_{k+1}}\left(e\right) = s_k\right\}} = 0$.

\subsection{Problem Formulation}
Given the defined models, our goal is to minimize the amount of information leaked to eavesdroppers per iteration while meeting time delay and energy consumption requirements of devices. This minimization problem involves optimizing the device selection strategy including determining the set $\mathcal{S}$ of model training devices, and the subset of $\left(\mathcal{U} \setminus \left\{s_i,s_j\right\}\right)$ of devices that transmit deceptive signals when each device $s_i$ transmits model information to device $s_j$, the model split strategy of $\boldsymbol{\Theta}_{\textrm{G}}$, and the transmit power strategy for selected devices in $\mathcal{U}$. The minimization problem is formulated as
\begin{equation} \label{eq:problem_formulation}
    \begin{aligned}
        \min_{\substack{\mathcal{S}, \boldsymbol{\Theta}_{\textrm{G}}, \boldsymbol{P},\\\left\{\mathcal{D}_{s_k, s_{k+1}}^{[1:S-1]}\right\},\\\left\{\mathcal{D}_{s_k, s_{k-1}}^{[2:S]}\right\}}}  \mathbb{E}\left[I\left(\boldsymbol{\theta}_1,\ldots,\boldsymbol{\theta}_S\right)\right],
    \end{aligned}
\end{equation}
\vspace{-0.7cm}

\begin{align} 
    \text{s.t.} \quad & \mathcal{S} \subseteq \mathcal{U}, \tag{14a} \label{eq:constraint_a} \\
    & \mathcal{D}_{s_i,s_j} \subseteq \mathcal{U} \setminus \left\{s_i,s_j\right\}, \forall i \neq j, \tag{14b} \label{eq:constraint_b} \\
    & \boldsymbol{\theta}_k \in \mathbb{R}^{n_k},\!\!\! \; \sum_{k=1}^S \!n_k = n_G, \tag{14c} \label{eq:constraint_c} \\
    &T_{tot}\left(\boldsymbol{\theta}_1,\ldots,\boldsymbol{\theta}_S\right) \leq \gamma_T, \tag{14d} \label{eq:constraint_d} \\
    &E_{tot}\left(\boldsymbol{\theta}_1,\ldots,\boldsymbol{\theta}_S\right) \leq \gamma_E, 
    \tag{14e} \label{eq:constraint_e}
\end{align}
where $\boldsymbol{P} = \left[p_1, \ldots, p_U \right]$, $n_k$ and $n_G$ are the dimensions of sub-model $\boldsymbol{\theta}_k$ and globe model $\boldsymbol{\Theta}_{\textrm{G}}$; $\gamma_T$ is the delay requirement of model training per iteration; $\gamma_E$ is the energy requirement of all the devices per iteration. Constraint (\ref{eq:constraint_a}) indicates that some device in $\mathcal{U}$ can be selected to train model. Constraint (\ref{eq:constraint_b}) shows that the devices sending deceptive signals are selected from $\mathcal{U}$ excluding the devices transmitting or receiving information in $\mathcal{S}$. Constraint (\ref{eq:constraint_c}) indicates that the global model is divided into $S$ separate sub-models and guarantees that the sub-models are assigned to the server and devices in $\mathcal{S}$. Constraint (\ref{eq:constraint_d}) is the delay requirement of model training per iteration. Constraint (\ref{eq:constraint_e}) indicates the maximum energy consumption of all the devices per iteration.

The problem in (\ref{eq:problem_formulation}) is challenging to solve due to the following reasons. First, the optimization variables (e.g., $\mathcal{S}$, $\mathcal{D}_{s_i,s_j}$, $\boldsymbol{\Theta}_{\textrm{G}}$, and $\boldsymbol{P}$) are coupled, since 1) each device can either be a model training device or a deceptive signal transmission device, and 2) the transmit power allocation depends on the model training devices selection, deceptive signal transmission devices selection, and the sub-model allocation. 
Second, neither the server nor the devices have prior knowledge of the eavesdroppers’ behavior or monitoring patterns (i.e., the monitoring probability $q_e^{(s_i,s_j)}$). Hence, traditional optimization algorithms without learning abilities \cite{liu2024survey} may not be able to solve this problem with unknown parameters. 
Third, since (\ref{eq:tot_time}) and (\ref{eq:energy_tot}) limit the energy and latency of the entire SL training across several training steps, the optimization variable decisions at one step will affect future optimization variable decisions, which further complicates the optimization 
problem.

\section{Proposed Soft Actor-Critic with Intrinsic Curiosity Module and Cross-Attention}

\begin{figure}[tp]
    \centering
\includegraphics[width=.4\textwidth]{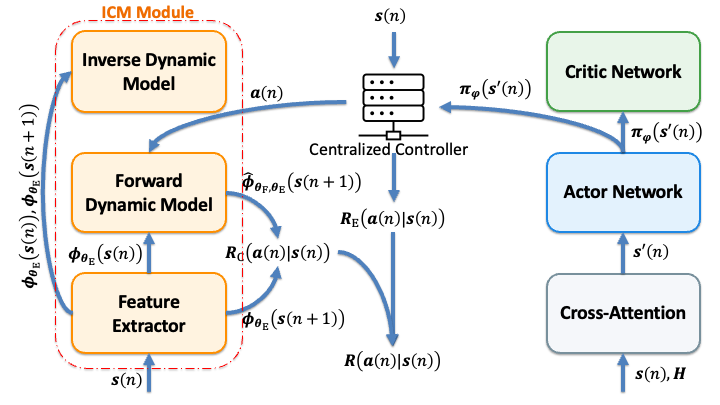}
\vspace{-0.2cm}
    \caption{The architecture of the ICM-CA.}
    \label{fig:flowchart}
      \vspace{-0.7cm}
\end{figure}

To address the problem in (\ref{eq:problem_formulation}), as shown in Fig. \ref{fig:flowchart}, we design a novel soft actor-critic deep reinforcement learning framework with intrinsic curiosity module and cross-attention (ICM-CA), which enables the server to optimize $\mathcal{S}$, $\mathcal{D}_{s_i,s_j}$, $\boldsymbol{\Theta}_{\textrm{G}}$, and $\boldsymbol{P}$ based on the observed eavesdroppers' activities, the remaining training model, time, and energy. Since the considered problem includes a large amount of optimization variables, standard RL methods \cite{nguyen2020deep, gao2024multi} may not be applied due to the large action and state spaces defined based on the number of optimization variables. Our proposed method leverages the ICM to encourage the server to explore novel actions and states, thus discovering better policies that can further reduce the amount of information leaked to eavesdroppers while meeting time and energy consumption constraints. 
To further improve the training efficiency, the proposed method uses a cross-attention mechanism to determine the importance of each historical state-action pair such that the designed RL method can focus more on the important state-action pairs \cite{10506764}.
Next, we first introduce the components of the ICM-CA framework and then detail its training process.

\subsection{Components of the ICM-CA Framework}

The fundamental components of the designed method are:

\textbf{1) Agent:} The agent is a centralized controller (e.g., the server) that determines the set of model training devices and their assigned sub-models, the set of deceptive signal transmission devices, and the transmit power of both model training and deceptive signal transmission devices, in order to reduce the amount of information leaked to eavesdroppers.

\textbf{2) State Space:} The state captures the current wireless network status and the SL training status. At each step $n$, the state is represented as
\begin{equation} \label{eq:state}
    \boldsymbol{s}\!\left(n\right)\!\! =\!\! \left[\!E_{\textrm{R}}\!\left(n\right)\!,\! T_{\textrm{R}}\!\left(n\right)\!,\! \boldsymbol{\Theta}_{G}'\!\!\left(n\right)\!, \!\boldsymbol{r}\!\left(n\right)\!,\! v\!\left(n\right)\!,\!\boldsymbol{l}_{\textrm{M}}\!\left(n\right)\!,\! \boldsymbol{l}_{\textrm{D}}\!\left(n\right)\!\right]\!, 
\end{equation}
where $E_{\textrm{R}}\left(n\right)$ and $T_{\textrm{R}}\left(n\right)$ are the remaining energy and time that the devices can use for model training and deceptive signal transmission, $\boldsymbol{\Theta}_{G}'\left(n\right)$ is the portion of the global model that are not assigned to the devices, and  $\boldsymbol{r}\left(n\right) = \left[r_1\left(n\right),r_2\left(n\right), \ldots,r_U\left(n\right)\right]$ is a vector of the model training device index where each index indicates if a device is assigned to train the sub-model. Specifically, $r_m\left(n\right)=0$ indicates that device $m$ is not selected to join SL training, while $r_m\left(n\right) = k$ implies that sub-model $k$ has been assigned to device $m$. $v\left(n\right)$ is the index of the model training device that transmits the output of the training model. $\boldsymbol{l}_{\textrm{M}}\left(n\right)$ and $\boldsymbol{l}_{\textrm{D}}\left(n\right)$ are the vectors of the distance between device $v\left(n\right)$ and the eavesdroppers, as well as other devices in $\mathcal{U}$, respectively. When $n > S$, all the sub-models and model training devices are assigned. Hence, $\boldsymbol{\Theta}_{G}'\left(n\right)$ and $\boldsymbol{r}\left(n\right)$ will not be changed from steps $S+1$ to $2S$ where the devices transmit gradient vectors for model update.

\textbf{3) Action Space:} The action at each step determines the model training device that will receive the training model output from the device selected in the previous step, the size of the sub-model assigned to the selected model training device, the deceptive signal transmission devices, and the transmit powers of the model training device and deceptive signal transmission devices. Hence, each action is expressed as
\begin{equation}
    \boldsymbol{a}\left(n\right)= \left[u\left(n\right), \boldsymbol{\theta}_n, \boldsymbol{d}\left(n\right),\boldsymbol{p}\left(n\right), x(n)\right],
\end{equation}
where $u\left(n\right)$ is the selected model training device index; $\boldsymbol{\theta}_n$ is the sub-model assigned to device $u\left(n\right)$; $\boldsymbol{d}\left(n\right)$ represents the deceptive signal transmission devices; $\boldsymbol{p}\left(n\right)$ is the transmit power of the model training device and deceptive signal transmission devices; $x\left(n\right) \in \left[0,S\right)$ is the index of the model training device that receives the model gradient vector from its connected device. Here, when $n \leq S$, no devices need to transmit gradient vectors for model update and hence $x(n) = 0$, otherwise, we have $x(n) \neq 0$. Meanwhile, when $n > S$, all the models have been assigned to the model training devices. Hence, $u(n)$ and $\boldsymbol{\theta}_n$ will always be $0$ and $\boldsymbol{0}$ respectively. In consequence, the constant values of $x(n)$, $u(n)$, and $\boldsymbol{\theta}_n$ will introduce several invalid actions. For example, when $n>S$, all the actions with $u(n)>0$ or $ \boldsymbol{\boldsymbol{\theta}} = \boldsymbol{0}$ are invalid. To ensure the agent only selects valid actions during the training process, an action mask algorithm is used \cite{vecchio2020mask}.

\textbf{4) Intrinsic Curiosity Module:} 
The ICM is used to encourage the agent to explore unknown state-action pairs so as to improve the speed of finding the action-state pairs thus improving the RL model training. This method is particularly useful for our defined RL solution since it consists of a large action and state space which are time consuming to explore by traditional RL methods. The ICM model consists of a feature extractor, a forward dynamic model, and an inverse dynamic model. The feature extractor is approximated by a multi-layer percepyrons (MLP) with residual blocks network, which extracts the state features  that are relevant to action selection. The feature extractor is defined as
\begin{equation} \label{eq:feature_extractor}
    {\boldsymbol{\phi}}_{\boldsymbol{\theta}_\text{E}}\left(\boldsymbol{s}\left(n\right)) = f_{\boldsymbol{\theta}_\text{E}}(\boldsymbol{s}\left(n\right)\right),
\end{equation}
where \( {\boldsymbol{\phi}}_{\boldsymbol{\theta}_\text{E}}(\boldsymbol{s}\left(n\right)) \) is the extracted features of state \( \boldsymbol{s}\left(n\right) \), and \( \boldsymbol{\theta}_\text{E} \) is a vector of the parameters of the feature extractor.
The forward dynamic model that consists of MLP, residual blocks, and gated recurrent units (GRU), is used to predict the feature representation of the next state given the current state $\boldsymbol{s}\left(n\right)$ and action $\boldsymbol{a}\left(n\right)$. This state prediction is used to evaluate how well the agent learns the relationship between the current action,  the current state, and the next state. A high prediction error implies that the agent does not learn the relationship well such that it is unfamiliar with the state and the actions. In consequence, the agent must explore this action and state more in subsequent steps. The next state predicted by the forward dynamic model is expressed as
\begin{equation} \label{eq:forward_model}
    \hat{{\boldsymbol{\phi}}}_{\boldsymbol{\theta}_{\textrm{F}}, \boldsymbol{\theta}_{\textrm{E}}}(\boldsymbol{s}(n+1)) = f_{\boldsymbol{\theta}_{\textrm{F}}}({\boldsymbol{\phi}}_{\boldsymbol{\theta}_\text{E}}(\boldsymbol{s}\left(n\right)), \boldsymbol{a}\left(n\right)),
\end{equation}
where $\boldsymbol{\theta}_{\textrm{F}}$ represents the parameters of the neural networks in the forward dynamic model. 
In (\ref{eq:feature_extractor}) and (\ref{eq:forward_model}), the feature extractor may extract the features that are not related to the prediction of the next state feature vector.
To address this problem, the inverse dynamic model that predicts the probability distribution over all possible actions based on ${\boldsymbol{\phi}}_{\boldsymbol{\theta}_{\textrm{E}}}(\boldsymbol{s}\left(n\right))$ and ${\boldsymbol{\phi}}_{\boldsymbol{\theta}_{\textrm{E}}}(\boldsymbol{s}(n+1))$ is used to verify whether the state features extracted by the feature extractor capture all the state transition information influenced by the action. 
The inverse dynamic model is approximated by a neural network with the same architecture of the forward dynamic model. The output of the inverse dynamic model is
\begin{equation} \label{eq:inverse_model}
    \hat{\boldsymbol{p}}_{\boldsymbol{\theta}_{\textrm{I}}, \boldsymbol{\theta}_{\textrm{E}}}\left(n\right) = N\left(f_{\boldsymbol{\theta}_{\textrm{I}}}({\boldsymbol{\phi}}_{\boldsymbol{\theta}_{\textrm{E}}}(\boldsymbol{s}\left(n\right)), {\boldsymbol{\phi}}_{\boldsymbol{\theta}_{\textrm{E}}}(\boldsymbol{s}(n+1)))\right),
\end{equation}
where $\hat{\boldsymbol{p}}_{\boldsymbol{\theta}_{\textrm{I}}, \boldsymbol{\theta}_{\textrm{E}}}\left(n\right)$ represents the predicted
probability distribution over the action space at step n. In $\hat{\boldsymbol{p}}_{\boldsymbol{\theta}_{\textrm{I}}, \boldsymbol{\theta}_{\textrm{E}}}\left(n\right)$, a higher probability $\hat{p}^i_{\boldsymbol{\theta}_{\textrm{I}}, \boldsymbol{\theta}_{\textrm{E}}}\left(n\right)$ implies that action $i$ has a higher probability that changes the state from ${\boldsymbol{\phi}}_{\boldsymbol{\theta}_{\textrm{E}}}(\boldsymbol{s}\left(n\right))$ to ${\boldsymbol{\phi}}_{\boldsymbol{\theta}_{\textrm{E}}}(\boldsymbol{s}\left(n+1\right))$; $\boldsymbol{\theta}_{\textrm{I}}$ are the parameters of the inverse dynamic model; $N(\cdot)$ is the softmax function. 

\textbf{5) Reward Function:} The reward function in ICM-CA must capture both the amount of information leaked to eavesdroppers and the agent’s motivation to explore unknown states. 
\subsubsection{The reward function for information leaked to eavesdroppers}
The reward that captures the amount of model information leaked to eavesdroppers is
\begin{equation} \label{eq:reward_transmit}
\begin{aligned}
&R_{\textrm{E}}\left(\boldsymbol{a}\left(n\right)|\boldsymbol{s}\left(n\right)\right)\\
&= \!\!
\begin{cases}
    0, \quad \quad \quad \quad \quad \quad \quad \quad \quad \quad \quad \quad \quad \quad \ \ \text{if} \ n = 1, \\
    -\!\!\sum\limits_{e \in \mathcal{E}}\!\!\mathds{1}_{\left\{\epsilon_{s_{n-1},s_{n}}\left(e\right) = s_{n-1}\right\}}\delta_{s_{n-1},s_{n}}Z_e\!\!\left(s_{n\!-\!1}\!,\!s_{n}\right)\\
    \quad - \omega_1\mathds{1}_{\left\{E_{\textrm{R}}\left(n\right)\leq 0\right\}}
    -\omega_2\mathds{1}_{\left\{T_{\textrm{R}}\left(n\right)\leq 0\right\}}, \ \ \ \text{if} \ 2 \leq n \leq S, \\
    -\!\!\sum\limits_{e \in \mathcal{E}}\!\!\mathds{1}_{\!\left\{\!\epsilon_{s_{2S\!-\!n\!+\!1},s_{2S\!-\!n}}\!\!\left(e\right)\! =\! s_{2S\!-\!n\!+\!1}\!\right\}}\!\delta_{s_{2S\!-\!n\!+\!1},s_{2S\!-\!n}}\!\!Z_e\!\!\left(\!s_{2S\!-\!n\!+\!1}\!,\! s_{2S\!-\!n}\!\right)\\
    \quad - \omega_1\mathds{1}_{\left\{E_{\textrm{R}}\left(n\right)\leq 0\right\}}
    -\omega_2\mathds{1}_{\left\{T_{\textrm{R}}\left(n\right)\leq 0\right\}}, \ \ \  \text{if} \ S < n \leq 2S\!-\!1,
\end{cases}
\end{aligned}
\end{equation}
where
$\mathds{1}_{\left\{E_{\textrm{R}}\left(n\right)\leq 0\right\}}$ and $\mathds{1}_{\left\{T_{\textrm{R}}\left(n\right)\leq 0\right\}}$ indicate whether the time and the energy used for SL model training and deceptive signal transmission meet the corresponding requirements in (\ref{eq:constraint_d}) and (\ref{eq:constraint_e}). Specifically, if they meet the requirements (i.e., the remaining energy and time are positive), $\mathds{1}_{\left\{E_{\textrm{R}}\left(n\right)\leq 0\right\}} = \mathds{1}_{\left\{T_{\textrm{R}}\left(n\right)\leq 0\right\}}=0$; otherwise, $\mathds{1}_{\left\{E_{\textrm{R}}\left(n\right)\leq 0\right\}} = \mathds{1}_{\left\{T_{\textrm{R}}\left(n\right)\leq 0\right\}}=1$; $\omega_1$ and $\omega_2$ are weight parameters that control the weight penalty in terms of the energy and the time;
$\mathds{1}_{\left\{\epsilon_{s_{n-1},s_{n}}\left(e\right) = s_{n-1}\right\}}\delta_{s_{n-1},s_{n}}Z_e\!\!\left(s_{n\!-\!1}\!,\!s_{n}\right)$ is the information leaked to eavesdropper $e$ during the model output transmission from device $\boldsymbol{\theta_1}$ to the server. The reason that $R_{\textrm{E}}\left(\boldsymbol{a}\left(0\right)|\boldsymbol{s}\left(0\right)\right) = 0$ is that the first step ($n=1$) only determines the device that will transmit the model output and the second step ($n=2$) determines the device that will receive the model output from the device determined in the first step.
$Z_e\left(s_{n-1},s_{n}\right)$ is the monitoring indicator and is expressed as
\begin{equation}
\begin{aligned}
    &Z_e\left(s_{n-1},s_{n}\right) \\&=
\begin{cases} 
1, & \!\!\!\!\text{if eavesdropper $e$  monitors the model transmission,} \\
0, & \!\!\!\!\text{otherwise.}
\end{cases}
\end{aligned}
\end{equation}
Here, the relationship between $Z_e\left(s_{n-1},s_{n}\right)$ and the probability $q_e^{\left(s_{n-1},s_{n}\right)}$ is $\mathbb{P}\left(Z_e\left(s_{n-1},s_{n}\right)=1\right)=q_e^{\left(s_{n-1},s_{n}\right)}$; $\mathds{1}_{\!\left\{\!\epsilon_{s_{2S\!-\!n\!+\!1},s_{2S\!-\!n}}\!\!\left(e\right) = s_{2S\!-\!n\!+\!1}\!\right\}}\!\delta_{s_{2S\!-\!n\!+\!1},s_{2S\!-\!n}}\!Z_e\!\left(\!s_{2S\!-\!n\!+\!1}\!,\! s_{2S\!-\!n}\!\right)$ is the amount of information leaked to eavesdropper $e$ during the process of transmitting gradient vectors from the server to model training devices.

\subsubsection{The reward function for unknown state exploration} The reward that captures the agent’s motivation to explore unknown states is
\begin{equation} \label{eq:reward_icm}
    R_{\textrm{C}}\!\!\left(\!\boldsymbol{s}\!\left(n\right), \boldsymbol{a}\!\left(n\right)\!\right) \!\!=\!\! \frac{1}{2}\! \left\| {\boldsymbol{\phi}}_{\boldsymbol{\theta}_{\textrm{E}}}\!\!\left(\boldsymbol{s}\left(n\!+\!1\right)\right) \!\!-\! \!\hat{\boldsymbol{\phi}}_{\boldsymbol{\theta}_{\textrm{F}}, \boldsymbol{\theta}_{\textrm{E}}}\!\!\left(\boldsymbol{s}\left(n\!+\!1\right)\right) \right\|_2^2\!,
\end{equation}
where $\|\cdot\|_2^2$ represents the squared Euclidean distance (L2 norm).
From (\ref{eq:reward_icm}), we see that, if the forward dynamic model cannot accurately predict the next state, it implies that the state $\boldsymbol{s}\left(n\right)$ is unknown and the reward $R_{\textrm{C}}\left(\boldsymbol{s}\left(n\right), \boldsymbol{a}\left(n\right)\right)$ is large. Hence, a large reward in (\ref{eq:reward_icm}) encourages the agent to explore unknown states, accelerating the efficiency of model training.

Given the definitions of two rewards, the total reward of the proposed RL method is
\begin{equation} \label{eq:reward_total}
    R\!\left(\boldsymbol{a}\left(n\right)|\boldsymbol{s}\left(n\right)\right)\! =\! R_{\textrm{E}}\!\left(\boldsymbol{a}\left(n\right)|\boldsymbol{s}\left(n\right)\right) \!+\! \zeta R_{\textrm{C}}\!\left(\boldsymbol{a}\left(n\right)|\boldsymbol{s}\left(n\right)\right),
\end{equation}
where $\zeta$ is a weight parameter.

\textbf{(6) Actor Network with Cross-Attention:}  
The actor network with a cross-attention mechanism is responsible for selecting actions based on the current state and past action-state pairs. Here, the cross-attention is used to generate a combined state $\boldsymbol{s}'(n)$ as follows:
\begin{equation} \label{eq:att_state}
\begin{aligned}
    \boldsymbol{s}'(n) &= X\left(\boldsymbol{s}\left(n\right), \boldsymbol{H}\right)
    = N \left(\frac{\boldsymbol{Q} \boldsymbol{K}^T}{\sqrt{C}}\right) \boldsymbol{V},
\end{aligned}
\end{equation}
where $\boldsymbol{H} = \left[(\boldsymbol{s}(n-1), \boldsymbol{a}(n-1));...; (\boldsymbol{s}(n-I), \boldsymbol{a}(n-I))\right]$ represents a vector including the last $I$ observed state-action pairs; \( \boldsymbol{Q} = \boldsymbol{W}_\textrm{Q} [\boldsymbol{s}(n); \boldsymbol{H}] \) is the query matrix derived from the current state and past state-action pairs; \( \boldsymbol{K} = \boldsymbol{W}_{\textrm{K}} \boldsymbol{H} \) and \( \boldsymbol{V} = \boldsymbol{W}_\textrm{V} \boldsymbol{H} \) represent the key and value matrices, which encode the past \( L \) state-action pairs in \( \boldsymbol{H} \). Here, \( \boldsymbol{W}_\textrm{Q}, \boldsymbol{W}_\textrm{K}, \boldsymbol{W}_\textrm{V} \) are learnable weight matrices, and \( C \) represents the dimensionality of the vectors in \( \boldsymbol{K} \) (i.e., the number of columns of \( \boldsymbol{K} \)).  
The actor network is approximated by an MLP with parameters \( \boldsymbol{\varphi} \). Its input is \( \boldsymbol{s}'(n) \) in (\ref{eq:att_state}) and its output is an action distribution vector $\boldsymbol{\pi}_{\boldsymbol{\varphi}}(\boldsymbol{s}'(n))$ that captures the likelihood of selecting each action.

\textbf{(7) Critic Network:}  
The critic network evaluates the state and provides a state value estimation for stable policy learning. The critic network takes the current state \( \boldsymbol{s}(n) \) as input and outputs the expected cumulative reward starting from \( \boldsymbol{s}(n) \) under the policy $\boldsymbol{\pi}_{\boldsymbol{\varphi}}(\boldsymbol{s}'(n))$. The critic network is approximated by a MLP neural network with parameters $\boldsymbol{\psi}$ and the critic network output is $V_{\boldsymbol{\psi}}^{\boldsymbol{\pi}_{\boldsymbol{\varphi}}}(\boldsymbol{s}(n))$.

\subsection{Training Process of the ICM-CA Method}

The training process of the ICM-CA method consists of the following steps.

\textbf{1) Data Collection:}
The agent observes eavesdroppers' activities, the remaining training model, time, and energy to collect experiences where each experience is represented by $(\boldsymbol{s}\left(n\right), \boldsymbol{a}\left(n\right), R\left(\boldsymbol{a}(n)|\boldsymbol{s}(n)\right), \boldsymbol{s}(n+1))$ and stores them in a replay buffer.

\textbf{2) ICM Update:}  
The ICM module update includes inverse dynamic model update, forward dynamic model update, and feature extractor update, which are specified as follows.


\textbf{(i) Inverse Dynamic Model Update:}  
The loss function used to update the inverse dynamic model is 
\begin{equation} \label{eq:inv_loss}
    L_{\textrm{I}}\left(\boldsymbol{\theta}_{\textrm{I}}, \boldsymbol{\theta}_{\textrm{E}}\right) = - \boldsymbol{b}(n)^T \log \hat{\boldsymbol{p}}_{\boldsymbol{\theta}_{\textrm{I}}, \boldsymbol{\theta}_{\textrm{E}}}\left(n\right),
\end{equation}
where \( \boldsymbol{b}(n)=\begin{bmatrix}
        b_1(n),
        b_2(n), 
        \ldots, 
        b_{|\mathcal{A}|}(n)
    \end{bmatrix}^T \) is a one-hot encoding vector that represents the action \( \boldsymbol{a}(n) \) selected at step \( n \)
with $\mathcal{A}$ being the set of all the actions and $|\mathcal{A}|$ is the number of actions in $\mathcal{A}$. In particular $b_i(n) = 1$ implies that action $i$ in $\mathcal{A}$ is selected by the agent; otherwise, $b_i(n) = 0$.
The inverse dynamic model parameters \( \boldsymbol{\theta}_{\textrm{I}} \) and the feature extractor \( \boldsymbol{\theta}_{\textrm{E}} \) are updated as
    $\boldsymbol{\theta}_{\textrm{I}} \leftarrow \boldsymbol{\theta}_{\textrm{I}} - \eta_1 \nabla_{\boldsymbol{\theta}_{\textrm{I}}} L_{\textrm{I}}\left(\boldsymbol{\theta}_{\textrm{I}}, \boldsymbol{\theta}_{\textrm{E}}\right)$
and
    $\boldsymbol{\theta}_{\textrm{E}} \leftarrow \boldsymbol{\theta}_{\textrm{E}} - \eta_1 \nabla_{\boldsymbol{\theta}_{\textrm{E}}} L_{\textrm{I}}\left(\boldsymbol{\theta}_{\textrm{I}}, \boldsymbol{\theta}_{\textrm{E}}\right)$,
where $\eta_1$ is the learning rate.

\textbf{(ii) Forward Dynamic Model Update:}  
Once the inverse dynamic model has been updated, the forward dynamic model updates its parameters to predict the feature representation of the next state. The loss function is
\begin{equation} \label{eq:fw_loss}
    L_{\textrm{F}}\!\left(\boldsymbol{\theta}_{\textrm{F}}, \boldsymbol{\theta}_{\textrm{E}}\right) \!=\! \frac{1}{2}\! \left\|\!\hat{{\boldsymbol{\phi}}}_{\boldsymbol{\theta}_{\textrm{F}}, \boldsymbol{\theta}_{\textrm{E}}}(\boldsymbol{s}(n+1))\! -\! {\boldsymbol{\phi}}_{\boldsymbol{\theta}_{\textrm{E}}}(\boldsymbol{s}(n+1))\!\right\|_2^2.
\end{equation}
The forward dynamic model parameters \( \boldsymbol{\theta}_{\textrm{F}} \) are updated as
    $\boldsymbol{\theta}_{\textrm{F}} \leftarrow \boldsymbol{\theta}_{\textrm{F}} - \eta_2 \nabla_{\boldsymbol{\theta}_{\textrm{F}}} L_{\textrm{F}}\left(\boldsymbol{\theta}_{\textrm{F}}, \boldsymbol{\theta}_{\textrm{E}}\right)$.

\textbf{(iii) Feature Extractor Update:}  
The feature extractor must be updated using a loss function that considers both the performance of both inverse dynamic model and forward dynamic model. The loss function is expressed as
\begin{equation} \label{eq:emb_loss}
    L_{\textrm{E}}\left(\boldsymbol{\theta}_{\textrm{E}}\right) = L_{\textrm{F}}\left(\boldsymbol{\theta}_{\textrm{F}}, \boldsymbol{\theta}_{\textrm{E}}\right) + \upsilon L_{\textrm{I}}\left(\boldsymbol{\theta}_{\textrm{I}}, \boldsymbol{\theta}_{\textrm{E}}\right),
\end{equation}
where \( \upsilon \) is the weight parameter. Then, the feature extractor parameters \( \boldsymbol{\theta}_{\textrm{E}} \) are updated as
    $\boldsymbol{\theta}_{\textrm{E}} \leftarrow \boldsymbol{\theta}_{\textrm{E}} - \eta_3 \nabla_{\boldsymbol{\theta}_{\textrm{E}}} L_{\textrm{E}}\left(\boldsymbol{\theta}_{\textrm{E}}\right)$.

\textbf{3) Critic Update:}  
The critic network is updated by reducing the difference between its predicted value of the current state and a reference value based on future rewards. The critic loss function is given by
\begin{equation} \label{eq:critic}
    L_{\textrm{C}}\!({\boldsymbol{\psi}})\!\! =\!\! \mathbb{E} \!\!\left[ \!(R\!\left(\boldsymbol{a}(n)|\boldsymbol{s}(n)\right)\! +\! \gamma V_{\boldsymbol{\psi}}^{\boldsymbol{\pi}_{\boldsymbol{\varphi}}}\!(\boldsymbol{s}(n+1))\!\! -\!\! V_{\boldsymbol{\psi}}^{\boldsymbol{\pi}_{\boldsymbol{\varphi}}}\!(\boldsymbol{s}\left(n\right)))^2\! \right]\!.
\end{equation}
Then, the parameters \( {\boldsymbol{\psi}} \) of the critic network are updated using gradient descent as
    ${\boldsymbol{\psi}} \leftarrow {\boldsymbol{\psi}} - \eta_c \nabla_{{\boldsymbol{\psi}}} L_{C}({\boldsymbol{\psi}})$,
where \( \eta_c \) is the learning rate.

\textbf{4) Actor Update:}  
The loss function used to update the actor network is
\begin{equation} \label{eq:actor}
    L_{\textrm{A}}({\boldsymbol{\varphi}})\! =\! -\!\mathbb{E}\! \left[ \log \pi_{\boldsymbol{\varphi}}\!(\boldsymbol{a}\left(n\right) | \boldsymbol{s}'\left(n\right)) Y_{\boldsymbol{\psi}}\!\left(n\right)\! -\! \alpha H\!\left(\boldsymbol{\pi}_{\boldsymbol{\varphi}}\!\!\left(s'(n)\right)\right) \right],
\end{equation}
where $\pi_{\boldsymbol{\varphi}}(\boldsymbol{a}\left(n\right) | \boldsymbol{s}'\left(n\right))$ is the probability of selecting action $\boldsymbol{a}(n)$ by the agent based on action distribution $\boldsymbol{\pi}_{\boldsymbol{\varphi}}\left(s'(n)\right)$; \( \alpha \) is the parameter that controls the balance between maximizing rewards and maintaining diversity in action selection; $Y_{\boldsymbol{\psi}}(\boldsymbol{a}(n), \boldsymbol{s}(n)) = R\left(\boldsymbol{a}(n)|\boldsymbol{s}(n)\right) + \gamma V_{\boldsymbol{\psi}}^{\boldsymbol{\pi}_{\boldsymbol{\varphi}}}(\boldsymbol{s}(n+1)) - V_{\boldsymbol{\psi}}^{\boldsymbol{\pi}_{\boldsymbol{\varphi}}}(\boldsymbol{s}\left(n\right))$ measures whether taking action $\boldsymbol{a}\left(n\right)$, which results in the change of the state from $\boldsymbol{s}(n)$ to $\boldsymbol{s}(n+1)$, can achieve a higher accumulated reward compare to the value of the current state $\boldsymbol{s}(n)$; $Y_{\boldsymbol{\psi}}(\boldsymbol{a}(n), \boldsymbol{s}(n) > 0$ implies that taking action $\boldsymbol{a}(n)$ leads to a higher estimated accumulated reward than staying in the current state $\boldsymbol{s}(n)$; otherwise, we have $Y_{\boldsymbol{\psi}}(\boldsymbol{a}(n), \boldsymbol{s}(n) \leq 0$; 
\( H\!\left(\boldsymbol{\pi}_{\boldsymbol{\varphi}}\left(s'(n)\right)\right) \) represents the entropy of the distribution $\boldsymbol{\pi}_{\boldsymbol{\varphi}}\left(s'(n)\right)$, which measures the uncertainty in action selection. 
The actor network parameters \( {\boldsymbol{\varphi}} \) are updated as
    ${\boldsymbol{\varphi}} \leftarrow {\boldsymbol{\varphi}} - \eta_a \nabla_{{\boldsymbol{\varphi}}} L_{\textrm{A}}({\boldsymbol{\varphi}})$,
where \( \eta_a \) is the learning rate. The detailed training process of ICM-CA is summarized in Algorithm \ref{alg:sac_icm_ca}.

\subsection{Convergence and Implementation Analysis}
Next, we analyze the convergence and implementation of the proposed ICM-CA method.

\subsubsection{Convergence analysis}
Here, we analyze the convergence of the proposed ICM-CA method. Since the proposed RL framework consists of a critic network $\boldsymbol{\psi}$ and an actor network $\boldsymbol{\varphi}$, we must analyze the convergence for both of them. Let \( V^{\pi_{\boldsymbol{\varphi}}}(\boldsymbol{s}(n)) \) be the actual value function, i.e., the expected cumulative reward starting from state \( \boldsymbol{s}(n) \) under policy \( \pi_{\boldsymbol{\varphi}} \). Then, the convergence of the critic network $\boldsymbol{\psi}$ is summarized in the following lemma.

\begin{lemma}
\label{lem:critic_converge}
If (i) the reward \( R(\boldsymbol{a}(n) | \boldsymbol{s}(n)) \) is bounded and (ii) the size of action space $\left|\mathcal{A}\right|$ is finite (i.e., $\left|\mathcal{A}\right| < \infty$), then the output of the critic network will converge to the actual value function \( V^{\pi_{\boldsymbol{\varphi}}}(\boldsymbol{s}(n)) \) \cite{haarnoja2018soft}.
\end{lemma}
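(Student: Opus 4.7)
The plan is to prove Lemma 1 in two stages. First, I would establish that the true value function $V^{\pi_{\boldsymbol{\varphi}}}$ is the unique fixed point of the policy-evaluation Bellman operator associated with $\pi_{\boldsymbol{\varphi}}$. Second, I would argue that the TD-style update prescribed by equation (\ref{eq:critic}) drives the critic's output toward this fixed point using standard stochastic approximation arguments.

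For the first stage, I would define the Bellman operator $\mathcal{T}^{\pi_{\boldsymbol{\varphi}}}$ acting on any bounded function $V$ by $(\mathcal{T}^{\pi_{\boldsymbol{\varphi}}} V)(\boldsymbol{s}) = \mathbb{E}_{\boldsymbol{a} \sim \pi_{\boldsymbol{\varphi}},\, \boldsymbol{s}' \sim p}\bigl[R(\boldsymbol{a}\mid\boldsymbol{s}) + \gamma V(\boldsymbol{s}')\bigr]$. Assumption (i) guarantees that this expectation is finite, while assumption (ii) ensures the inner average over actions reduces to a finite sum and hence the operator is well defined on the Banach space of bounded real-valued functions under the sup-norm. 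A routine calculation exploiting the linearity of expectation and the bound $|V_1(\boldsymbol{s}') - V_2(\boldsymbol{s}')| \leq \|V_1 - V_2\|_{\infty}$ then yields $\|\mathcal{T}^{\pi_{\boldsymbol{\varphi}}} V_1 - \mathcal{T}^{\pi_{\boldsymbol{\varphi}}} V_2\|_{\infty} \leq \gamma \|V_1 - V_2\|_{\infty}$, so $\mathcal{T}^{\pi_{\boldsymbol{\varphi}}}$ is a $\gamma$-contraction. Banach's fixed-point theorem then delivers a unique fixed point, which coincides by construction with $V^{\pi_{\boldsymbol{\varphi}}}$.

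For the second stage, I would note that the regression target $R(\boldsymbol{a}(n)\mid\boldsymbol{s}(n)) + \gamma V_{\boldsymbol{\psi}}^{\boldsymbol{\pi}_{\boldsymbol{\varphi}}}(\boldsymbol{s}(n+1))$ appearing in (\ref{eq:critic}) is an unbiased one-sample estimator of $(\mathcal{T}^{\pi_{\boldsymbol{\varphi}}} V_{\boldsymbol{\psi}}^{\boldsymbol{\pi}_{\boldsymbol{\varphi}}})(\boldsymbol{s}(n))$. The squared-error gradient step therefore realizes a stochastic approximation of the projected Bellman operator. Under the standard Robbins-Monro step-size conditions $\sum_n \eta_c(n) = \infty$ and $\sum_n \eta_c(n)^2 < \infty$, together with the boundedness of $R$ and finiteness of $|\mathcal{A}|$ ensuring that the iterates remain in a compact set, I would invoke the soft policy-evaluation convergence argument used in Theorem 1 of the SAC reference to conclude $V_{\boldsymbol{\psi}}^{\boldsymbol{\pi}_{\boldsymbol{\varphi}}}(\boldsymbol{s}) \to V^{\pi_{\boldsymbol{\varphi}}}(\boldsymbol{s})$ almost surely.

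The main obstacle I anticipate is bridging the gap between the tabular contraction argument and convergence of a neural-network-parameterized critic. I would handle this by adopting the standard assumption, also implicit in the cited work, that the MLP hypothesis class is rich enough for the projection onto it to preserve the fixed point of $\mathcal{T}^{\pi_{\boldsymbol{\varphi}}}$; the projected operator then remains a contraction in the relevant weighted norm. Without both hypotheses of the lemma, this machinery fails: an unbounded reward destroys the sup-norm contraction argument, and an infinite action space compromises both the well-definedness of the expectation in $\mathcal{T}^{\pi_{\boldsymbol{\varphi}}}$ and the compactness needed to apply the stochastic approximation theorem.
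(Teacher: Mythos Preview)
Your proposal and the paper's proof do two entirely different things. You attempt to prove the implication stated in the lemma itself, namely that bounded reward and finite action space together force the critic iterates to converge to $V^{\pi_{\boldsymbol{\varphi}}}$. The paper, by contrast, does not prove this implication at all: it treats the convergence claim as a black-box result imported from the cited SAC reference, and devotes the entire ``proof'' to verifying that hypotheses (i) and (ii) are satisfied by the specific reward (\ref{eq:reward_total}) and action space of their algorithm. Concretely, the paper bounds $R_{\textrm{E}}$ above by $0$ and below by $-|\mathcal{E}|\delta - \omega_1 - \omega_2$, bounds $R_{\textrm{C}}$ using that the feature-extractor outputs lie in $[0,1]$, and then argues finiteness of $\mathcal{A}$ from the finiteness of device sets, layer counts, and discretized power levels.

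Your Bellman-contraction plus stochastic-approximation outline is a reasonable sketch of how one would actually justify the cited result, and it correctly identifies the gap between tabular contraction and neural-network approximation as the delicate point. But if your goal is to match the paper, you are over-proving: the paper expects you only to check that \emph{their} reward is bounded and \emph{their} action space is finite, then invoke the citation. What your approach buys is an honest proof of the lemma as written; what the paper's approach buys is brevity at the cost of leaving the substantive analytic work to the reference.
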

\begin{proof}
In our designed algorithm, the reward $R_{\textrm{E}}\left(\boldsymbol{a}\left(n\right)|\boldsymbol{s}\left(n\right)\right)$ in (\ref{eq:reward_transmit}) is bounded since its maximum value is $0$; Its minimum value is $-\left|\mathcal{E}\right|\delta_{s_{n-1},s_{n}}-\omega_1-\omega_2$ if $n \leq S$ and $\left|\mathcal{E}\right|\delta_{s_{2S-n+1},s_{2S-n}}-\omega_1-\omega_2$ if $S <n \leq 2S - 1$. Meanwhile, since each element in 
${\boldsymbol{\phi}}_{\boldsymbol{\theta}_{\textrm{E}}}\!\!\left(\boldsymbol{s}\left(n\!+\!1\right)\right) $ or $ \hat{\boldsymbol{\phi}}_{\boldsymbol{\theta}_{\textrm{F}}, \boldsymbol{\theta}_{\textrm{E}}}\!\!\left(\boldsymbol{s}\left(n\!+\!1\right)\right)$ is between $0$ and $1$ \cite{pathak2017curiosity}, $R_{\textrm{C}}\left(\boldsymbol{a}\left(n\right)|\boldsymbol{s}\left(n\right)\right)$ in (\ref{eq:reward_icm}) is also bounded. Therefore, the reward function \( R(\boldsymbol{a}(n) | \boldsymbol{s}(n)) \) in (\ref{eq:reward_total}) is bounded and hence our designed RL meets the first condition. Next, we prove that the action space $\mathcal{A}$ is finite. In an action $\boldsymbol{a}(n)$, the device selection $\mathcal{S}$ and $\mathcal{D}$ is limited by the total number of devices in the system, the split model $\boldsymbol{\theta}_k$ depends on the total number of layers in the ML model, and the transmit power for each device is chosen from a predefined set of discrete levels. Therefore, the action space is also finite. Hence, both conditions in Lemma~\ref{lem:critic_converge} are satisfied. This completes the proof.
\end{proof}

\noindent When the critic network converges, the actor network is guaranteed to converge, as proved in ~\cite[Theorem~1]{haarnoja2018soft}.

\subsubsection{Implementation analysis}
The proposed ICM-CA method includes a training stage and a decision-making stage. In the training stage, the selected model training device $s_k$ receives the output $\boldsymbol{z}_{k-1}$ from the previous device $s_{k-1}$. Then, it generates the output $\boldsymbol{z}_k$ and sends it to the next device $s_{k+1}$. During backpropagation, each device $s_k$ receives the loss gradient $\frac{\partial \mathcal{L}}{\partial \boldsymbol{z}_k}$ from device $s_{k+1}$, updates its local model $\boldsymbol{\theta}_k$, as well as computes and sends $\frac{\partial \mathcal{L}}{\partial \boldsymbol{z}_{k-1}}$ to device $s_{k-1}$. To calculate the policy $\boldsymbol{\pi}_{\boldsymbol{\varphi}}$, the server needs to collect the time and energy consumption of each device computing and transmitting the model information. In the decision-making stage, the trained actor network outputs a policy $\boldsymbol{\pi}_{\boldsymbol{\varphi}}$ to enable the server to determine the model training and deceptive signal transmission devices, model assignments, and transmit powers.

\begin{algorithm}[t]
\footnotesize
\caption{The ICM-CA Algorithm for Private SL}
\label{alg:sac_icm_ca}
\begin{algorithmic}[1]
    \STATE \textbf{Initialize:} Actor network $\boldsymbol{\pi}_{\boldsymbol{\varphi}}$, critic network $V_{\boldsymbol{\psi}}$, and replay buffer.
    
    \FOR{each episode}
        \FOR{each step $n = 1, 2, \dots, S$}
            \STATE \textbf{State Representation:} Compute the cross-attention-enhanced state representation $\boldsymbol{s}'\left(n\right)$ using (\ref{eq:att_state}).
            \STATE \textbf{Action Selection:} Sample action $\boldsymbol{a}\left(n\right) \sim \boldsymbol{\pi}_{\boldsymbol{\varphi}}(\boldsymbol{s}'\left(n\right))$.
            \STATE \textbf{Environment Interaction:} Execute action $\boldsymbol{a}\left(n\right)$ and observe next state $\boldsymbol{s}(n+1)$ and reward $R\left(\boldsymbol{a}(n)|\boldsymbol{s}(n)\right)$ using (\ref{eq:reward_total}).
            \STATE \textbf{Replay Buffer Update:} Observe and store each experience $(\boldsymbol{s}\left(n\right), \boldsymbol{a}\left(n\right), R\left(\boldsymbol{a}(n)|\boldsymbol{s}'(n)\right), \boldsymbol{s}(n+1))$ in replay buffer.
            
            \STATE \textbf{ICM Update:}
            \STATE \hspace{0.5cm} 1. Update inverse dynamic model and feature extractor.
            \STATE \hspace{0.5cm} 2. Update forward dynamic model.
            \STATE \hspace{0.5cm} 3. Update feature extractor with combined loss.
            
            \STATE \textbf{Critic and Actor Update:} 
            \STATE \hspace{0.5cm} 1. Update the critic network.
            \STATE \hspace{0.5cm} 2. Update the actor network.
            
        \ENDFOR
    \ENDFOR
\end{algorithmic}
\end{algorithm}


\section{Performance Analysis}
Next, we analyze how the transmit powers of the model training and deceptive signal transmission devices affect the information leaked to the eavesdroppers. The analytical results can be used to simply the RL action space. It can also find the optimal transmit powers of the model training and deceptive signal transmission devices to minimize the expected amount of information leaked to eavesdroppers under two special network settings.
\begin{theorem}\label{the:1}
Given a model training device $s_k$ that transmits the output of model $\boldsymbol{\theta}_k$ to model training device $s_{k+1}$, the set $\mathcal{D}_{s_k, s_{k+1}}$ of deceptive signal transmission devices, and the probability $q_e^{(s_k,s_{k+1})}$ that eavesdropper $e \in \mathcal{E}$ monitors the model information transmitted by $s_k$, the expectation of the amount of information leaked to all eavesdroppers at this model output transmission is  
\begin{equation}
\begin{aligned}
    &\mathbb{E}\left(I_{s_k,s_{k+1}}\right) \\
    &=\!\!\!\sum_{e \in \mathcal{E}}\!\prod_{d \in \mathcal{D}_{s_k,s_{k+1}}}\!\!\!\!\!\frac{p_{s_k}m_{s_k,e}^{-2}}{p_{d}m_{d,e}^{-2}\!+\!p_{s_k}m_{s_k,e}^{-2}}q_e^{s_k,s_{k+1}} \delta_e^{s_k,s_{k+1}}\!\!\left(\boldsymbol{\theta}_k\right).
\end{aligned}
\end{equation}
\end{theorem}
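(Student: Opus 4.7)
\textbf{Proof plan for Theorem~\ref{the:1}.} My plan is to start from the per-eavesdropper definition of information leakage implied by the eavesdropper model: information leaks to eavesdropper $e$ during the transmission from $s_k$ to $s_{k+1}$ iff (i) $e$ is actually monitoring that transmission, an event with probability $q_e^{(s_k,s_{k+1})}$, and (ii) the SNR of $s_k$'s signal at $e$ strictly dominates the SNRs of all deceptive signals, so that $\epsilon_{s_k,s_{k+1}}(e)=s_k$. Because the monitoring event $Z_e$ is independent of the physical-layer fading that determines $\epsilon_{s_k,s_{k+1}}(e)$, linearity of expectation over $\mathcal{E}$ reduces the claim to
\[
\mathbb{E}\!\left[I_{s_k,s_{k+1}}\right] = \sum_{e\in\mathcal{E}} P\bigl(\epsilon_{s_k,s_{k+1}}(e)=s_k\bigr)\, q_e^{(s_k,s_{k+1})}\, \delta_e^{s_k,s_{k+1}}(\boldsymbol{\theta}_k),
\]
so only $P(\epsilon_{s_k,s_{k+1}}(e)=s_k)$ remains to be evaluated.

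Next I would substitute the channel model $h_{j,e}=o_{j,e}m_{j,e}^{-2}$ into the definition in \eqref{eq:arg_snr}, so that the received power from device $j$ at eavesdropper $e$ becomes $p_j o_{j,e} m_{j,e}^{-2}$, with the Rayleigh-fading coefficients $\{o_{j,e}\}$ treated as independent unit-mean exponentials. For a single deceptive device $d$, the event that $s_k$'s signal beats $d$'s signal at $e$ is $\{p_{s_k}o_{s_k,e}m_{s_k,e}^{-2}>p_d o_{d,e}m_{d,e}^{-2}\}$, i.e.\ a comparison of two scaled independent exponentials. Writing $X=p_{s_k}m_{s_k,e}^{-2}$ and $Y_d=p_d m_{d,e}^{-2}$, I would compute the ratio-of-exponentials CDF to obtain the pairwise probability
\[
P\bigl(X o_{s_k,e} > Y_d o_{d,e}\bigr) = \frac{X}{X+Y_d} = \frac{p_{s_k}m_{s_k,e}^{-2}}{p_{s_k}m_{s_k,e}^{-2}+p_d m_{d,e}^{-2}},
\]
which already matches a single factor of the product claimed in the theorem.

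The main obstacle is then aggregating the pairwise comparisons into the product form. The naive identity $P(A>\max_d B_d)=\prod_d P(A>B_d)$ is false in general, because the events $\{Xo_{s_k,e}>Y_d o_{d,e}\}$ share the common variable $o_{s_k,e}$ and so are positively correlated. To arrive at the product I would invoke the modelling convention implicit in the paper's deceptive-signal formulation: each deceptive device independently attempts to deceive eavesdropper $e$, and the outcomes of these independent deception attempts factorise. Equivalently, conditioning on the set of realisations used by each deceptive link, the probability that \emph{none} of the deceptive signals succeeds in masking the legitimate signal factorises across $d$, giving $\prod_{d\in\mathcal{D}_{s_k,s_{k+1}}}\tfrac{X}{X+Y_d}$. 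This is the step where I would be most careful to state the independence hypothesis explicitly, since it is what turns a correlated max-of-SNRs event into the clean product that the theorem asserts.

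Finally, substituting this product into the per-eavesdropper expression from the first step and summing over $e\in\mathcal{E}$ yields the stated formula. The remaining work is bookkeeping: verifying that the constants $X$ and $Y_d$ in my intermediate notation correspond exactly to $p_{s_k}m_{s_k,e}^{-2}$ and $p_d m_{d,e}^{-2}$, and that the per-eavesdropper payoff $\delta_e^{s_k,s_{k+1}}(\boldsymbol{\theta}_k)$ is what multiplies the probability and monitoring factor in the final sum.
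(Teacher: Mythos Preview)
Your proposal follows the same route as the paper: reduce $\mathbb{E}[I_{s_k,s_{k+1}}]$ to $\sum_{e}P(\epsilon_{s_k,s_{k+1}}(e)=s_k)\,q_e\,\delta_e$, factor the probability as a product of pairwise SNR comparisons, evaluate each pairwise term under Rayleigh/exponential fading to get $\tfrac{p_{s_k}m_{s_k,e}^{-2}}{p_{s_k}m_{s_k,e}^{-2}+p_d m_{d,e}^{-2}}$, and sum over $\mathcal{E}$. The correlation concern you raise about $P(A>\max_d B_d)\neq\prod_d P(A>B_d)$ is legitimate, but the paper does not address it either---it simply writes $\mathbb{P}(\epsilon_{s_k,s_{k+1}}(e)=s_k)=\prod_d \mathbb{P}(S_{d,e}<S_{s_k,e})$ as its opening step---so your explicit ``modelling convention'' is exactly the tacit assumption the paper makes; your derivation is otherwise identical (the paper carries out the pairwise integral via Rayleigh PDFs where you invoke the ratio-of-exponentials identity directly).
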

\begin{proof}
See Appendix A.
\end{proof}
\noindent From Theorem \ref{the:1}, we see that $\mathbb{E}\left(I_{s_k,s_{k+1}}\right)$ depends on the locations and transmit powers of model training and deceptive signal transmission devices, the eavesdropping probability, and the size of the training model. 
From Theorem \ref{the:1}, we see that, when $p_{s_k} = 0$, $\mathbb{E}\left(I_{s_k,s_{k+1}}\right) = 0$. This is because no model information is transmitted between model training devices $s_k$ and $s_{k+1}$. When $p_{d} = + \infty$, we also have $\mathbb{E}\left(I_{s_k,s_{k+1}}\right) = 0$. This is due to the fact that the strength of deceptive signals is large enough to confuse eavesdroppers.
Using Theorem \ref{the:1}, we can obtain the optimal transmit powers $p_{s_k}^*$ and $p_d^*, \forall d \in \mathcal{D}_{s_k,s_{k+1}}$ under the following two network scenarios: 1) a network has one deceptive signal device (i.e., $\left|\mathcal{D}_{s_k,s_{k+1}}\right| = 1$) and 2) a network has one eavesdropper (i.e., $\left|\mathcal{E}\right| = 1$) with the assumption that the interference among the deceptive signals is ignored. These optimal powers are analyzed in the following corollary.
\begin{corollary} \label{col:1}
    Consider a network where a model training device $s_k$ that transmits the output of model $\boldsymbol{\theta}_k$ to model training device $s_{k+1}$ with 1) the time constraint $B_{\textrm{T}}$ and the energy constraint $B_{\textrm{E}}$, 2) one deceptive signal transmission device (i.e., $\left|\mathcal{D}_{s_k,s_{k+1}}\right|=1$), and 3) $\left|\mathcal{E}\right|$ eavesdroppers, the optimal transmit powers of model training and the deceptive signal transmission device that can minimize $\mathbb{E}\left(I_{s_k,s_{k+1}}\right)$ are
    \begin{equation} \label{eq:col1_final}
    \begin{aligned}
        p_{s_k}^* &= \frac{BN_0 + {m^{-2}_{s_k,d}o}\frac{B_{\textrm{E}}}{B_{\textrm{T}}}}{m^{-2}_{s_k,s_{k+1}}o \, / \left(2^{\frac{\Gamma\left(\boldsymbol{z}_k\right)}{B_{\textrm{T}}B}}-1\right) + {m^{-2}_{s_k,d}o}}, \\
        p_d^* &= \frac{m^{-2}_{s_k,s_{k+1}}o\frac{B_{\textrm{E}}}{B_{\textrm{T}}} - BN_0\left(2^{\frac{\Gamma\left(\boldsymbol{z}_k\right)}{B_{\textrm{T}}B}}-1\right)}{m^{-2}_{s_k,s_{k+1}}o + {m^{-2}_{s_k,d}o}{\left(2^{\frac{\Gamma\left(\boldsymbol{z}_k\right)}{B_{\textrm{T}}B}}-1\right)}},
    \end{aligned}
    \end{equation}
\end{corollary}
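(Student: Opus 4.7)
\textbf{Proof proposal for Corollary~\ref{col:1}.} With $|\mathcal{D}_{s_k,s_{k+1}}|=1$, each summand in Theorem~\ref{the:1} can be rewritten as $1/\bigl(1+(p_d/p_{s_k})(m_{d,e}^{-2}/m_{s_k,e}^{-2})\bigr)$, which is strictly decreasing in the single ratio $p_d/p_{s_k}$. Hence minimizing $\mathbb{E}(I_{s_k,s_{k+1}})$ is equivalent to \emph{maximizing} $p_d/p_{s_k}$ over the feasible set defined by (\ref{eq:constraint_d}) and (\ref{eq:constraint_e}); in particular the optimizers do not depend on the eavesdropper locations or on $q_e^{(s_k,s_{k+1})}$, which already explains why only $m_{s_k,s_{k+1}}$, $m_{s_k,d}$, $B_T$, and $B_E$ appear in (\ref{eq:col1_final}). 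The plan is a three-step reduction to a $2\times 2$ linear system.

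Step one is to show that the rate (time) constraint is active at the optimum. If $T_{s_k,s_{k+1}}<B_T$ then $c_{s_k,s_{k+1}}>\Gamma(\boldsymbol{z}_k)/B_T$, so $p_{s_k}$ can be lowered with $p_d$ held fixed until $T_{s_k,s_{k+1}}=B_T$; this strictly enlarges $p_d/p_{s_k}$ and cannot violate (\ref{eq:constraint_e}) since $p_{s_k}$ only decreases. Setting $c_{s_k,s_{k+1}}=\Gamma(\boldsymbol{z}_k)/B_T$ in (\ref{eq:data_rate}) and substituting $h_{s_k,s_{k+1}}=o\,m_{s_k,s_{k+1}}^{-2}$, $h_{d,s_{k+1}}=o\,m_{s_k,d}^{-2}$ yields the linear rate equation
\[
p_{s_k}\,o\,m_{s_k,s_{k+1}}^{-2}=\alpha\bigl(p_d\,o\,m_{s_k,d}^{-2}+BN_0\bigr),\qquad \alpha:=2^{\Gamma(\boldsymbol{z}_k)/(B_T B)}-1.
\]

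Step two is to show that the energy constraint is active too. On the rate-tight locus above, a direct calculation gives $p_d/p_{s_k}=(o\,m_{s_k,s_{k+1}}^{-2}/\alpha)/(o\,m_{s_k,d}^{-2}+BN_0/p_d)$, which is strictly increasing in $p_d$. So the optimum pushes $p_d$ to the largest value compatible with (\ref{eq:constraint_e}). Combined with $T_{s_k,s_{k+1}}=B_T$, this reduces the energy budget to $(p_{s_k}+p_d)B_T\le B_E$ and produces the second linear equation $p_{s_k}+p_d=B_E/B_T$.

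Step three is to solve the $2\times 2$ system. Eliminating $p_{s_k}=B_E/B_T-p_d$ in the rate equation yields a single linear equation in $p_d$ whose solution is the claimed $p_d^*$; back-substitution delivers $p_{s_k}^*$, matching (\ref{eq:col1_final}). The main obstacle is decoupling the two constraints in steps one and two, because the energy budget $(p_{s_k}+p_d)T_{s_k,s_{k+1}}\le B_E$ is a priori nonlinear in $(p_{s_k},p_d)$ through the rate $c_{s_k,s_{k+1}}$ hidden inside $T_{s_k,s_{k+1}}$. I would handle this by first collapsing time to the equality $T_{s_k,s_{k+1}}=B_T$ in step one, after which the energy budget becomes the clean linear inequality $p_{s_k}+p_d\le B_E/B_T$ that underpins the monotonicity argument of step two.
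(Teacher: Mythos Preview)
Your proposal follows the paper's approach almost verbatim: argue that both the time and energy constraints must be active at the optimum, then solve the resulting $2\times 2$ linear system $\xi_0 p_{s_k}-\xi_d p_d=\chi_1$, $p_{s_k}+p_d=\chi_2$. Your reduction to maximizing the single ratio $p_d/p_{s_k}$ is a clean addition the paper does not make explicit (the paper simply asserts monotonicity of $\mathbb{E}(I_{s_k,s_{k+1}})$ in each power separately and jumps to the two equalities).

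One small caveat on Step~1: the claim that lowering $p_{s_k}$ with $p_d$ fixed ``cannot violate \eqref{eq:constraint_e} since $p_{s_k}$ only decreases'' is not actually justified, because $T_{s_k,s_{k+1}}$ \emph{increases} during that move and the product $(p_{s_k}+p_d)T_{s_k,s_{k+1}}$ can grow. The paper glosses over exactly the same point, so your argument is at least as rigorous as the original; you even flag this coupling explicitly as ``the main obstacle,'' which the paper does not. The gap closes cleanly once you observe that along any ray $p_d=r\,p_{s_k}$ the energy $(1+r)p_{s_k}T$ is strictly increasing in $p_{s_k}$ (a short calculation using $\ln(1+x)>x/(1+x)$), so the minimum-energy feasible point on each ray sits on the time-tight line, after which your Step~2 and Step~3 go through unchanged.
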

\begin{proof}
See Appendix B.
\end{proof}
From Corollary \ref{col:1}, we see that, to guarantee $p_d^* > 0$, $B_{\textrm{T}}$ and $B_{\textrm{E}}$ must satisfy $m^{-2}_{s_k,s_{k+1}}o\frac{B_{\textrm{E}}}{B_{\textrm{T}}} - BN_0\left(2^{\frac{\Gamma\left(\boldsymbol{z}_k\right)}{B_{\textrm{T}}B}}-1\right) > 0$. For example, if $B_{\textrm{E}} = 0$, $m^{-2}_{s_k,s_{k+1}}o\frac{B_{\textrm{E}}}{B_{\textrm{T}}} - BN_0\left(2^{\frac{\Gamma\left(\boldsymbol{z}_k\right)}{B_{\textrm{T}}B}}-1\right) < 0$. Hence, we cannot find a valid $p_d^*$ to satisfy time and energy constraints (\ref{eq:col_1_mini_ini_c1}) and (\ref{eq:col_1_mini_ini_c2}). 

The optimal powers of model training and deceptive signal transmission devices under the network with one eavesdropper are given in the following corollary.
\begin{corollary} \label{col:2}
    Consider a scenario where a model training device $s_k$ that transmits the output of model $\boldsymbol{\theta}_k$ to model training device $s_{k+1}$ with 1) the time constraint $B_{\textrm{T}}$ and the energy constraint $B_{\textrm{E}}$, 2) the ignorance of the interference from deceptive signal transmission devices, 3) $\left|\mathcal{D}_{s_k,s_{k+1}}\right|$ deceptive signal transmission devices, and 4) one eavesdropper (i.e., $\left|\mathcal{E}\right| = 1$), the optimal transmit powers of model training and deceptive signal transmission devices that can minimize $\mathbb{E}\left(I_{s_k,s_{k+1}}\right)$ are
\begin{equation} \label{eq:col2_final}
\begin{aligned}
    p_{s_k}^* &= \frac{BN_0\left(2^{\frac{\Gamma\left(\boldsymbol{z}_k\right)}{B_{\textrm{T}}B}}-1\right)}{m^{-2}_{s_k,s_{k+1}}o},\\
    p_d^* &= \frac{m^{-2}_{s_k,s_{k+1}}o\frac{B_{\textrm{E}}}{B_{\textrm{T}}}-BN_0\left(2^{\frac{\Gamma\left(\boldsymbol{z}_k\right)}{B_{\textrm{T}}B}}-1\right)}{m^{-2}_{s_k,s_{k+1}}om_{d,e}^{-2}\sum_{d \in \mathcal{D}_{s_k, s_{k+1}}}m_{d,e}^2}, \forall d \in \mathcal{D}_{s_k,s_{k+1}},
\end{aligned}
\end{equation}
\end{corollary}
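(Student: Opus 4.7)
\textbf{Proof proposal for Corollary \ref{col:2}.} The plan is to specialize Theorem \ref{the:1} to a single eavesdropper, exploit monotonicity to pin down $p_{s_k}^*$ from the delay constraint, and then optimize the deceptive powers against the residual energy budget.

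First, since the interference from the deceptive devices is dropped from the data-rate expression by assumption (2), the link rate $c_{s_k,s_{k+1}}$ in (\ref{eq:data_rate}) reduces to $B\log_2\bigl(1+p_{s_k} m_{s_k,s_{k+1}}^{-2} o/(BN_0)\bigr)$ and depends only on $p_{s_k}$. Enforcing $T_{s_k,s_{k+1}}^{\textrm{S}} \le B_{\textrm{T}}$ from (\ref{eq:constraint_d}) and solving for $p_{s_k}$ yields a lower bound whose right-hand side is exactly the claimed $p_{s_k}^*$. With $|\mathcal{E}|=1$, the leakage expectation in Theorem \ref{the:1} becomes $q\,\delta \prod_{d}\tfrac{p_{s_k} m_{s_k,e}^{-2}}{p_{s_k} m_{s_k,e}^{-2}+p_d m_{d,e}^{-2}}$, and each factor is strictly increasing in $p_{s_k}$. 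Hence the optimizer saturates the time constraint, fixing $p_{s_k}$ at the lower bound and giving the first formula in (\ref{eq:col2_final}).

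Second, with $p_{s_k}$ fixed at $p_{s_k}^*$ and $T=B_{\textrm{T}}$ plugged into the energy constraint (\ref{eq:constraint_e}), the residual budget for deceptive transmissions is $P_{\textrm{rem}}:=B_{\textrm{E}}/B_{\textrm{T}}-p_{s_k}^*$. Writing $A:=p_{s_k}^* m_{s_k,e}^{-2}$, the remaining problem is to maximize $\prod_{d}(A + p_d m_{d,e}^{-2})$ subject to $\sum_{d}p_d \le P_{\textrm{rem}}$ and $p_d\ge 0$. Setting up the Lagrangian, taking the first-order condition in each $p_d$, and combining with the binding budget $\sum_d p_d = P_{\textrm{rem}}$ yields the common-interference allocation $p_d^* m_{d,e}^{-2} = P_{\textrm{rem}}/\sum_{d'} m_{d',e}^2$. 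Solving for $p_d^*$ and then substituting $P_{\textrm{rem}} = B_{\textrm{E}}/B_{\textrm{T}} - p_{s_k}^*$ (using the closed form of $p_{s_k}^*$ obtained in the first step) recovers the expression for $p_d^*$ in (\ref{eq:col2_final}).

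The main obstacle will be the second step: the multiplicative coupling of the $p_d$ through the product form of the leakage obstructs a direct separation, and one must verify that the stationary point is feasible (all $p_d^* \ge 0$) and globally optimal. The cleanest route is to rewrite the product in terms of the per-device interference level $p_d m_{d,e}^{-2}$, apply an AM-GM style argument to confirm that equalizing this level maximizes the product for a given linear budget $\sum_d p_d$, and check that the regime $m^{-2}_{s_k,s_{k+1}}o\,B_{\textrm{E}}/B_{\textrm{T}} > BN_0(2^{\Gamma(\boldsymbol{z}_k)/(B_{\textrm{T}}B)}-1)$ (which makes $P_{\textrm{rem}}>0$) rules out any boundary KKT case, so that the interior closed-form solution applies.
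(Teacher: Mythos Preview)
Your proposal is correct and follows essentially the same route as the paper: saturate the interference-free time constraint via monotonicity to pin down $p_{s_k}^*$, bind the energy budget to obtain the residual $\sum_d p_d^*=\chi_2-\chi_1/\xi_0$, and then distribute that residual so that all per-device received levels $p_d\,m_{d,e}^{-2}$ (equivalently, all factors in the leakage product) are equalized. The paper phrases the second step as ``the product is minimized when all factors are equal'' with a convex-optimization citation, while you cast it as a Lagrangian/AM-GM argument, but the two framings are the same in substance and deliver the identical closed form.
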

\begin{proof}
    See Appendix C.
\end{proof}
From Corollary \ref{col:2}, we see that, $p_{s_k}^*$ only relies on $B_{\textrm{T}}$ and will decrease if $B_{\textrm{T}}$ increases, and $p_d^*$ will increase if $B_{\textrm{E}}$ increases. This is because a longer transmission time allows the model training device to use lower power, and a larger energy constraint enables deceptive devices to allocate more power to generate interference, thus reducing the amount of information leaked to eavesdroppers while ensuring transmission efficiency.
For other scenarios, the closed-form solutions are not available.

\section{Simulation Results and Analysis}

\begin{table}[t]
\caption{Simulation Parameters}
\label{tb:simu_para}
\centering
\footnotesize
\setlength{\tabcolsep}{0.6mm}{
\begin{tabular}{|c|c||c|c|}
\hline
\textbf{Parameters}         & \textbf{Value}  & \textbf{Parameters}         & \textbf{Value}  \\ \hline
$S$ & $4$ & $
\lambda_b^{(\boldsymbol{\theta}_k)}$ (FLOPs)& $1 \text{-} 2 (\times 10^9)$ \\ \hline
$\omega^{\textrm{B}}$ (cycles/bit) & $10^4\text{-}10^6$ & $o$ & 1 \\ \hline
$f^{\textrm{B}}$ (GHz) & $4\text{-}7$ & $\gamma_T$ (s) & 8 \\ \hline
$B$ (MHz) & $1$ & $\gamma_E$ (J) &75 \\ \hline
$N_0$ (dBm/Hz) & $-90$ & $q_e^{\left(s_i,s_j\right)}$ & 0.8 \\ \hline
$\zeta$ & $0.3$ & $\eta_a$ & $1 \times 10^{-4}$ \\ \hline
$\upsilon$ & $5\text{-}8$ & $\eta_c$ & $3 \times 10^{-4}$ \\ \hline
\end{tabular}
}
\vspace{-0.4cm}
\end{table}

For our simulations, we consider a network setup consisting of $U = 6$ devices and $E = 2$ eavesdroppers, distributed randomly within a $800 \times 800\, m^2$ area. 
We use the ImageNet-1k dataset to train a ResNet-101 model \cite{he2016deep}. Other parameters are listed in Table \ref{tb:simu_para}.
For comparison purposes, we consider four baselines:  
a) the proposed method without the ICM module, where the agent is guided only by the reward $R_{\textrm{E}}(\boldsymbol{a}(n)|\boldsymbol{s}(n))$ produced by the RL framework;  
b) the proposed method without the CA module, where the actor network makes decisions only based on the current state $\boldsymbol{s}(n)$ without incorporating historical state-action pairs.  
Baselines a) and b) are used to evaluate how the ICM and CA modules improve the overall performance of our method. 
c) the proximal policy optimization (PPO) method, which updates the policy using a clipped objective function to avoid large and unstable policy shifts  \cite{li2023uav};  
d) the Q-learning algorithm, which learns a value function to estimate the expected return of each action in a discrete action space \cite{zhang2017energy}.  
Baselines c) and d) are used to compare our proposed SAC based framework with other commonly used RL algorithms.

\begin{figure}[tp]
    \centering
\includegraphics[width=.45\textwidth]{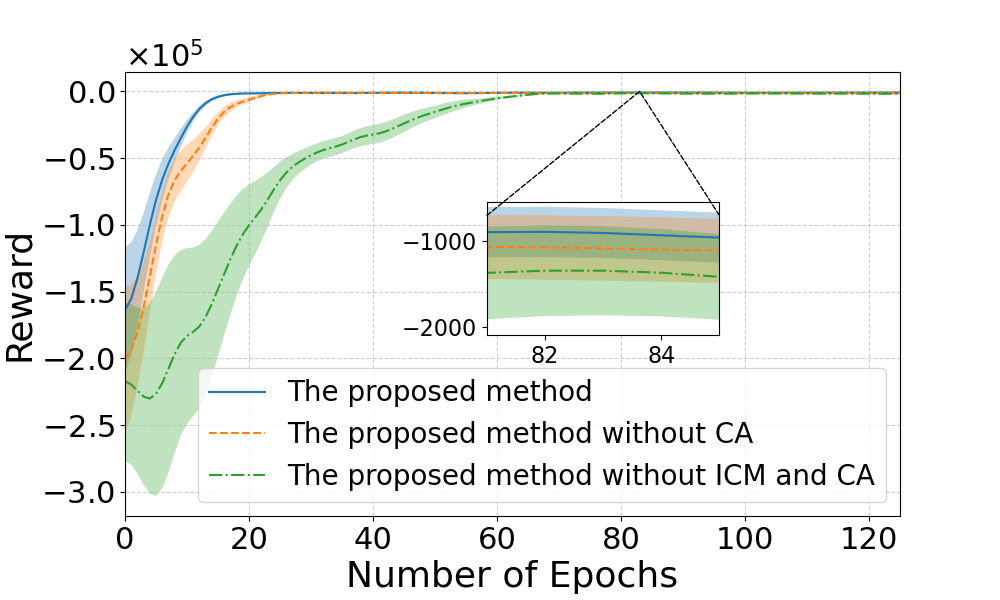}
\vspace{-0.2cm}
    \caption{Convergence of the proposed method.}
    \label{fig:f3}
      \vspace{-0.5cm}
\end{figure}

In Fig. \ref{fig:f3}, we show the convergence of our proposed ICM-CA method and evaluate how ICM and CA module affects the convergence of the proposed algorithm. From this figure, we see that the proposed method improves the convergence rate and the accumulated reward by up to $3 \times$ and $30 \%$ compared to the proposed method without ICM. This is because the ICM encourages the server to explore more actions and states during the training process. We also see that, with the CA module, the proposed method improves the accumulated reward by up to $9 \%$. This is due to the fact that the CA module can find the important state-action pairs to improve the training of the actor network.

\begin{figure}[tp]
    \centering
\includegraphics[width=.45\textwidth]{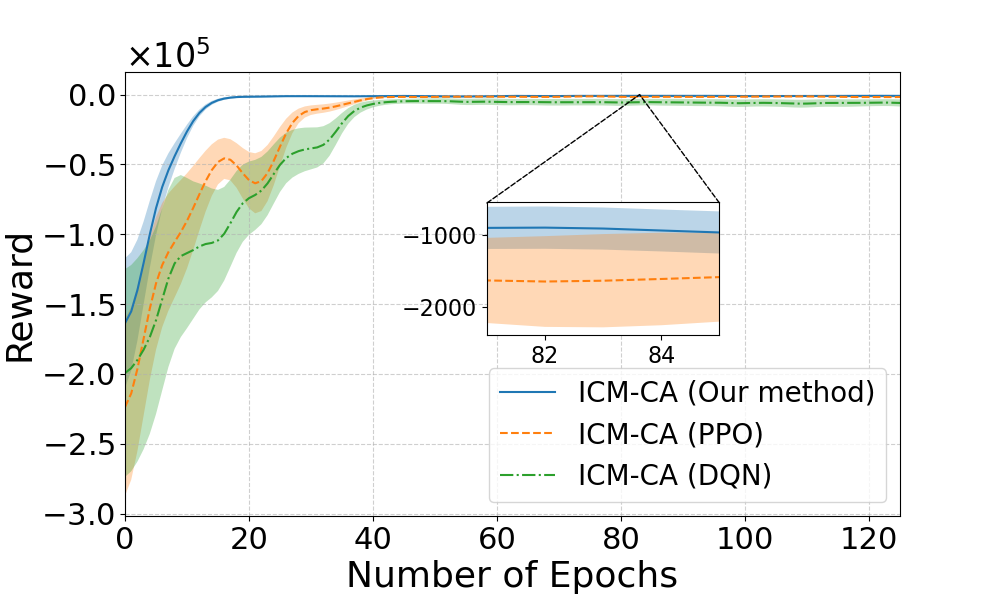}
\vspace{-0.2cm}
    \caption{The convergence of the proposed method with different neural network models.}
    \label{fig:f4}
      \vspace{-0.5cm}
\end{figure}

In Fig. \ref{fig:f4}, we show the convergence of the proposed method with different neural network models. From this figure, we see that our method improves the convergence rate by up to $2 \times$ compared to those using PPO and DQN. Fig. \ref{fig:f4} also shows that the accumulated reward of the proposed method with an SAC is $40 \%$ higher than that with a PPO. This is because the proposed method uses a critic network to generate $V_{\boldsymbol{\psi}}^{\boldsymbol{\pi}_{\boldsymbol{\varphi}}}(\boldsymbol{s}(n))$ and evaluate the value of current state $\boldsymbol{s}(n)$ in order to achieve a stable training process. 

\begin{figure}[tp]
    \centering
\includegraphics[width=.43\textwidth]{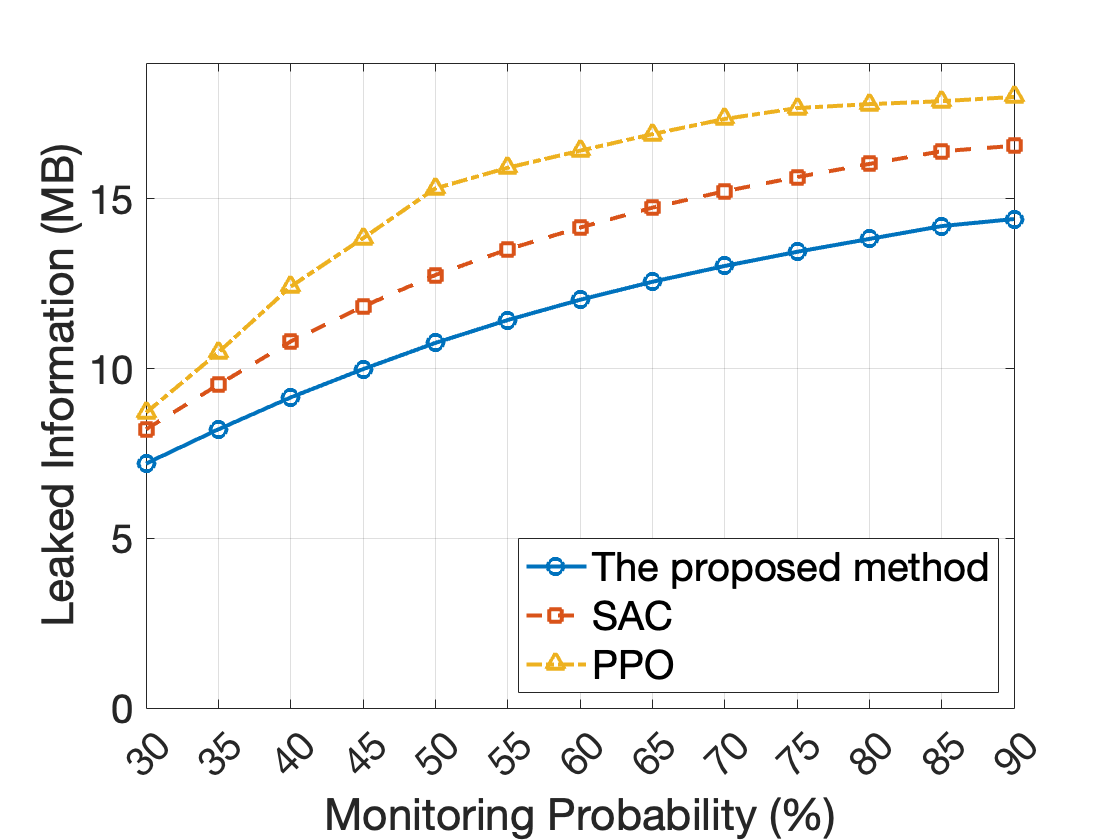}
\vspace{-0.2cm}
    \caption{The amount of information leaked to eavesdroppers as the monitoring probability varies.}
    \label{fig:f5}
      \vspace{-0.5cm}
\end{figure}

Fig. \ref{fig:f5} shows how the monitoring probabilities of each eavesdropper affects the amount of leaked information. From this figure, we see that, with the increase of the monitoring probability, the amount of information leaked from all the considered methods increases. In particular, our proposed method reduces the amount of information leaked to eavesdroppers by $13\%$ and $22 \%$ compared to the SAC and PPO algorithm. This is because the reward $R_{\textrm{C}}\left(\boldsymbol{s}\left(n\right), \boldsymbol{a}\left(n\right)\right)$ from the ICM module helps the agent explore more states during training and find better policies to reduce the amount of leaked information. We also see that, when the monitoring probability increases from $30 \%$ to $90 \%$, the amount of information leaked by the proposed method is less compared to those of the SAC and PPO. This is because the cross-attention module enables the actor network to utilize historical state-action pairs to find important pairs that can make policies to reduce the amount of information leaked to eavesdroppers when the monitoring probability increases.

\begin{figure}[tp]
    \centering
\includegraphics[width=.4\textwidth]{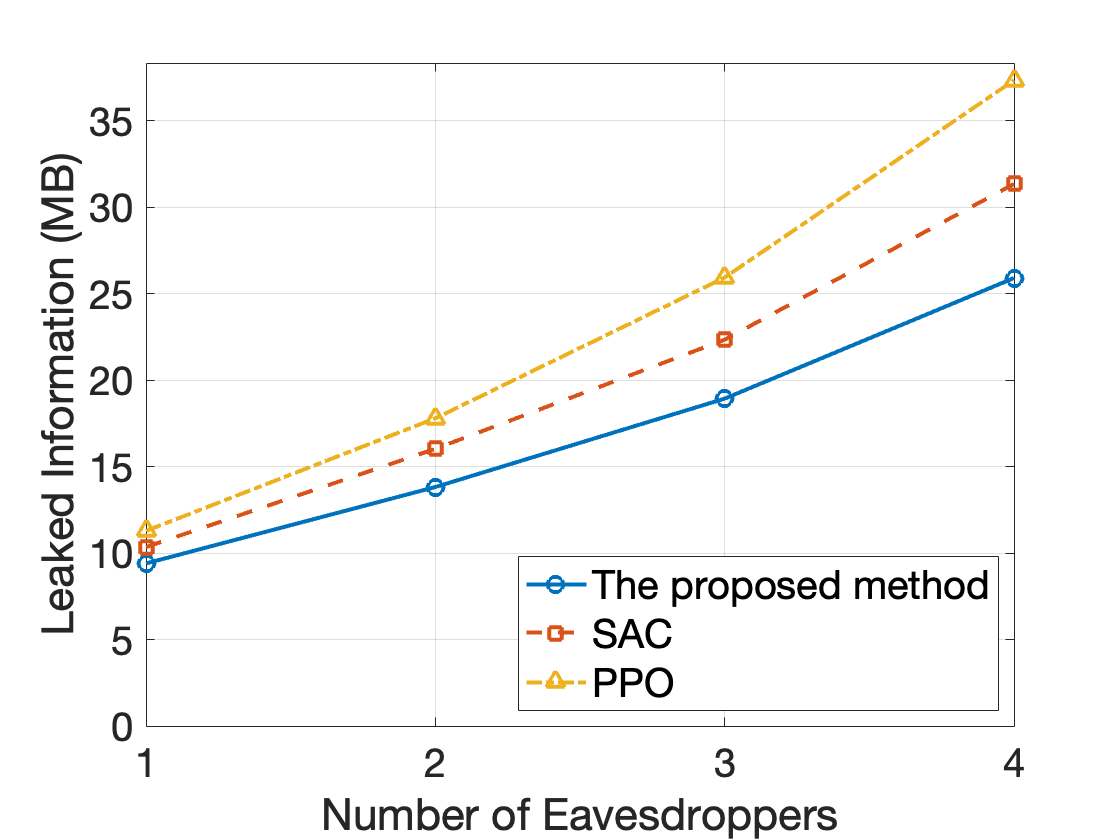}
\vspace{-0.2cm}
    \caption{The amount of information leaked to eavesdroppers as the number of eavesdroppers varies.}
    \label{fig:f6}
      \vspace{-0.5cm}
\end{figure}

In Fig. \ref{fig:f6}, we show the impact of the number of eavesdroppers on the amount of amount of information leaked to eavesdroppers. We see that, when the number of eavesdropper is $1$, all the considered methods have a similar amount of leaked information. This is because all the methods confuse the eavesdropper. However, as the number of eavesdroppers increases, the gap in terms of the amount of information leaked among the proposed method, SAC, and PPO increase. In particular, our method can reduce the amount of information leaked to eavesdroppers by up to $18 \%$ and $30 \%$ compared to SAC and PPO when the network has $4$ eavesdroppers. This is because, with more eavesdroppers, the state space expands rapidly, and it is difficult for traditional RL methods to find optimal policies during training. In contrast, the ICM module in our method can still encourage exploring more states and the CA module can help find better policies that can reduce the amount of information leaked to eavesdroppers.

\begin{figure}[tp]
    \centering
\includegraphics[width=.4\textwidth]{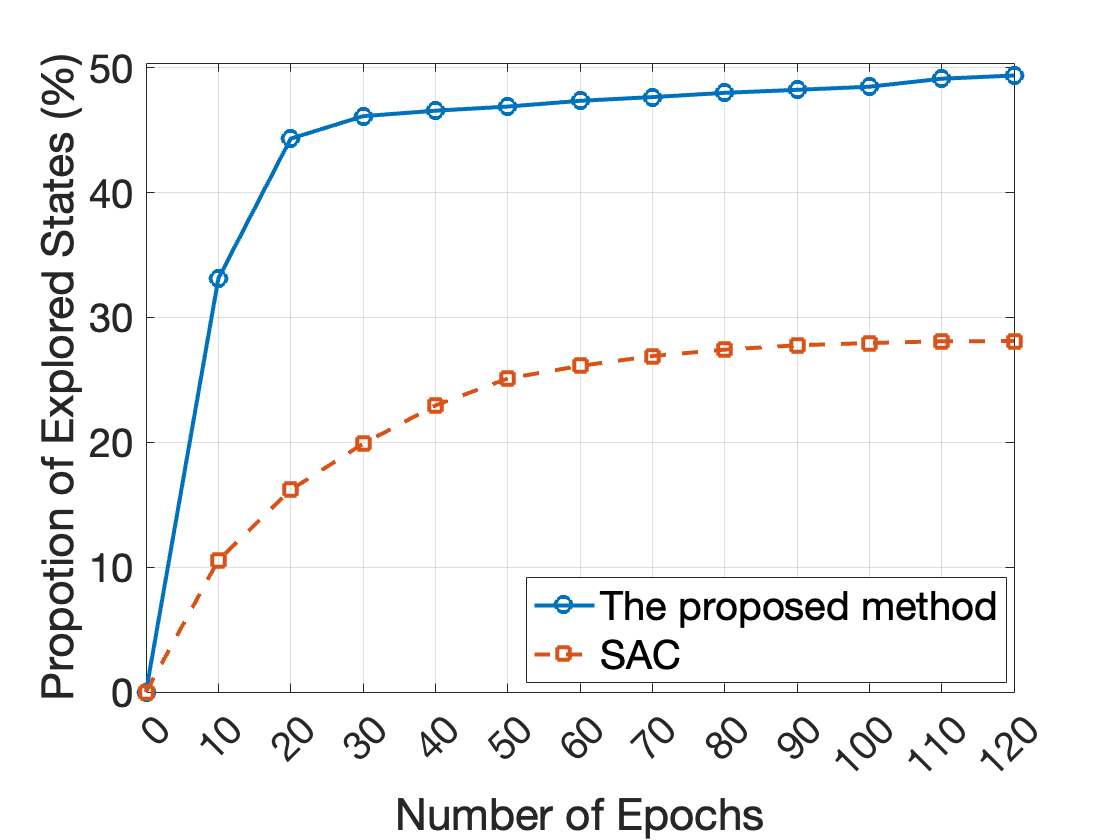}
\vspace{-0.2cm}
    \caption{The state exploration ability of the proposed method and SAC.}
    \label{fig:f7}
      \vspace{-0.5cm}
\end{figure}


Fig.~\ref{fig:f7} shows the state exploration ability of the proposed method and the SAC. From the figure, we see that the proposed method explores $2.5 \times$ states compared to the SAC method within $20$ epochs. This is because the reward $R_{\textrm{C}}(\boldsymbol{s}(n), \boldsymbol{a}(n))$ introduced by the ICM module is large during the first 20 epochs, which motivates the agent to explore unknown states. After the first 20 epochs, the curve flattens because most useful states have already been visited, and the forward and inverse dynamics models are well trained. 

\begin{figure}[tp]
    \centering
\includegraphics[width=.4\textwidth]{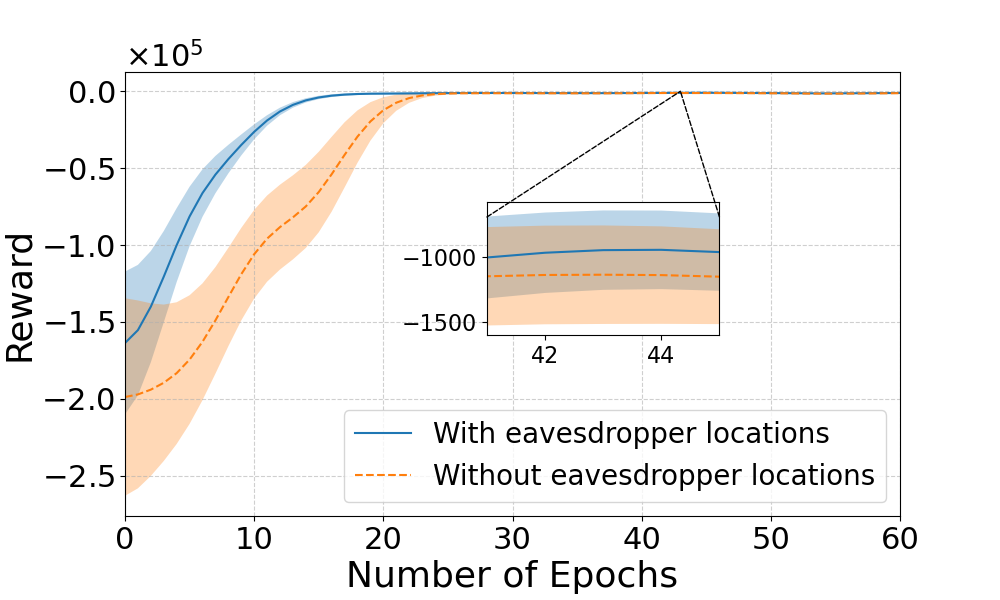}
\vspace{-0.2cm}
    \caption{Convergence of the proposed method without eavesdropper location information.}
    \label{fig:f8}
      \vspace{-0.5cm}
\end{figure}

Fig. \ref{fig:f8} shows the training performance of the proposed method when the server does not know the locations of eavesdroppers. From Fig. \ref{fig:f8}, we see that, without the eavesdropper location information, the proposed method still has similar convergence rate, but the accumulated reward decreases $12 \%$ when the number of epochs is $25$. This is because our method can implicitly estimate the potential locations of eavesdroppers through RL training, thus enabling the agent to assign devices and transmit powers under eavesdropper location uncertainty.

\begin{figure}[t]
    \centering
    \begin{minipage}{0.4\textwidth} 
        \centering
        \includegraphics[width=\linewidth]{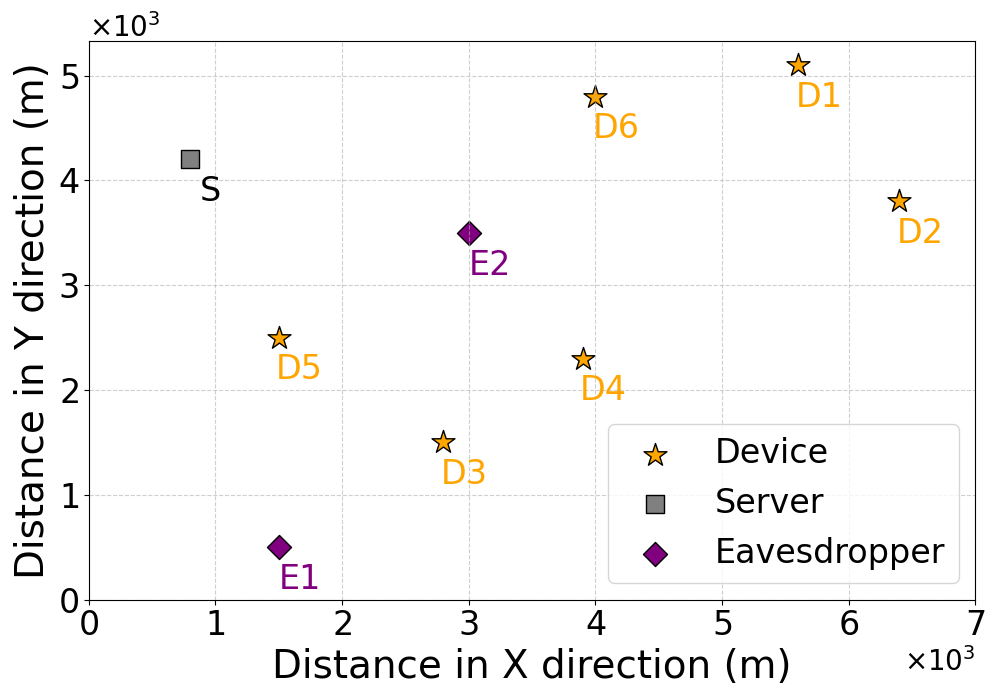}
    \end{minipage}
    \hfill
    \begin{minipage}{0.4\textwidth} 
        \centering
        \includegraphics[width=\linewidth]{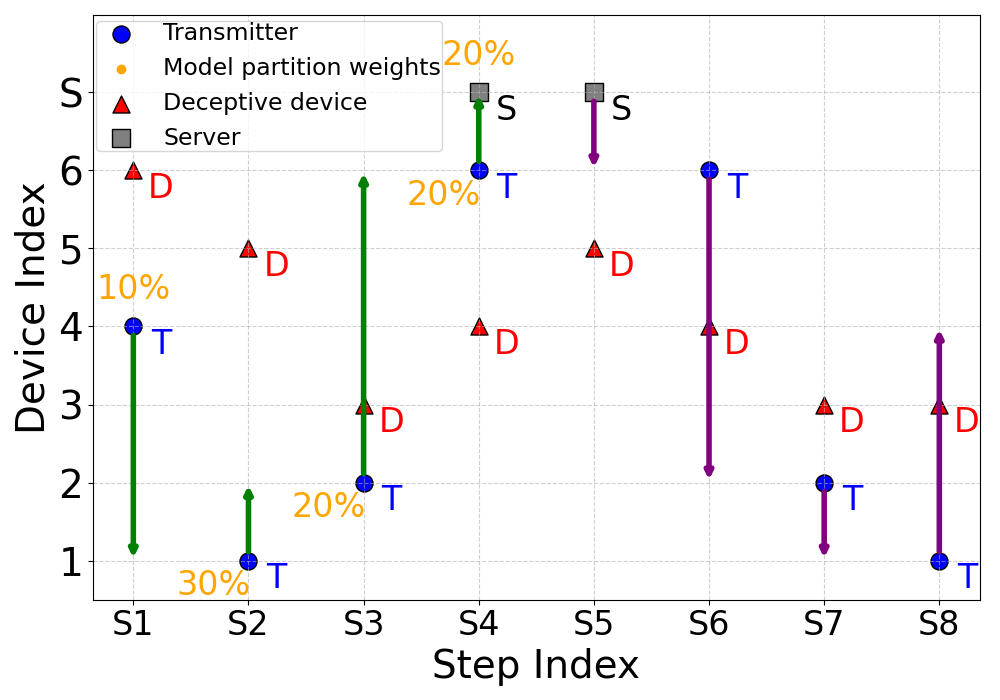}
    \end{minipage}
          \vspace{-0.2cm}
    \caption{Example of the proposed ICM-CA method.}
    \label{fig:f9}
      \vspace{-0.7cm}
\end{figure}

In Fig. \ref{fig:f9}, we show one example of SL training implementation. From this figure, we see that our proposed method selects the devices that are far away from the eavesdroppers as the model training devices, while selects the devices that are close to the eavesdroppers as the deceptive signal transmission devices. 
Fig. \ref{fig:f9} also shows that the algorithm assigns a larger sub-model $\boldsymbol{\theta}_k$ to device $1$ compared to the model assigned to device $4$, since device $1$ is far away from eavesdroppers and has less probability to be monitored. Meanwhile, the algorithm will assign a larger sized sub-model to a model training device (e.g., $D_1$) that is close to other model training devices while maintaining a safe distance from eavesdroppers, which can reduce transmission time while reducing the amount of information leaked to eavesdroppers.

\section{Conclusion}\label{Conclusions}
In this paper, we have proposed an ICM-CA framework to address the challenge of secure split learning in the presence of eavesdroppers. We have formulated an optimization problem that aims to minimize the overall amount of information leaked to eavesdroppers during SL training process, while considering delay and energy consumption constraints. The framework dynamically determines model training device sets, optimizes model splitting, and allocates deceptive signal transmission devices while considering energy, time, and privacy constraints. By leveraging the ICM and CA module, the proposed algorithm encourages the server to explore novel actions and states, and determines the importance of each historical state-action pair, thus improving training efficiency. Simulation results have shown that the ICM-CA framework improves the convergence rate and the accumulated reward compared to the baselines.

\section*{Appendix}
\vspace{-0.5cm}
\subsection{Proof of Theorem \textnormal{\ref{the:1}}}

To prove Theorem \ref{the:1}, we must calculate the probability $\mathbb{P}\left(\epsilon_{s_k,s_k+1}\left(e\right)=s_k\right)$ that eavesdropper $e$ monitors the information from model training device 
$s_k$. Let $S_{d,e} = p_{d} o m_{d,e}^{-2}$ and $S_{s_k,e} = p_{s_k} o m_{s_k,e}^{-2}$. From (\ref{eq:arg_snr}), we see that $\epsilon_{s_k,s_k+1}\left(e\right)=s_k$ only when $\frac{S_{d,e}}{BN_0}<\frac{S_{s_k,e}}{BN_0}, \forall d \in \mathcal{D}_{s_k,s_{k+1}}$. Hence, $\mathbb{P}\left(\epsilon_{s_k,s_k+1}\left(e\right)=s_k\right)$ is
\begin{equation} \label{eq:p1}
    \begin{aligned}
    \mathbb{P}\left(\epsilon_{s_k,s_k+1}\left(e\right)=s_k\right) &= \!\!\!\!\prod_{d \in \mathcal{D}_{s_k,s_{k+1}}}\!\!\!\!\!\!\!\mathbb{P}\left(\frac{S_{d,e}}{BN_0}<\frac{S_{s_k,e}}{BN_0}\right).
\end{aligned}
\end{equation}
Since $\prod_{d \in \mathcal{D}_{s_k,s_{k+1}}}\!\!\!\!\!\!\!\mathbb{P}\left(\frac{S_{d,e}}{BN_0}\!<\!\frac{S_{s_k,e}}{BN_0}\right)\!\!=\!\!\prod_{d \in \mathcal{D}_{s_k,s_{k+1}}}\!\!\!\!\!\!\mathbb{P}\!\left(S_{d,e}\!\!<\!S_{s_k,e}\right)$, we compute $\mathbb{P}\left(S_{d,e}\!\!<\!\!S_{s_k,e}\right)$. Since $S_{d,e}$ and $S_{s_k,e}$ follow exponential distributions with mean $2\!\left(p_d m_{d,e}^{-2} \sigma^2\right)$ and $2\left(p_{s_k} m_{s_k,e}^{-2} \sigma^2\right)$, where $\sigma^2$ is the channel fading variance, the probability density functions (PDF) of $S_{d,e}$ and $S_{s_k,e}$ are:
\begin{equation} \label{eq:pdf}
\begin{aligned}
    f_{S_{d,e}}(s) &= \frac{s}{(C_{d,e} \sigma)^2} e^{-s^2 / (2 (C_d \sigma)^2)}, \quad s \geq 0,\\
    f_{S_{s_k,e}}(s) &= \frac{s}{(C_{s_k,e} \sigma)^2} e^{-s^2 / (2 (C_{s_k} \sigma)^2)}, \quad s \geq 0,
\end{aligned}
\end{equation}
where \( C_{i,e} = p_i m_{i,e}^{-2} \).
Based on $f_{S_{d,e}}(s)$ in (\ref{eq:pdf}), the cumulative distribution functions (CDF) of $S_{d, e}$ is
$F_{S_{d,e}}(s) = 1 - e^{-s^2 / (2 (C_{d,e} \sigma)^2)}$.
Then, the probability $\mathbb{P}\left(S_{d, e} < S_{s_k,e}\right)$ is
\begin{equation} \label{eq:inte_ini}
\begin{aligned}
    &\mathbb{P}(S_{d,e} < S_{s_k,e})\\ &= \int_0^\infty \mathbb{P}(S_{d,e} < s_{s_k,e}) f_{S_{s_k,e}}(s_{s_k,e}) ds_{s_k,e}\\
    &= \int_0^\infty F_{S_{d,e}}(s_{s_k,e})
     \frac{s_{s_k,e}}{(C_{s_k,e} \sigma)^2} e^{-s_{s_k,e}^2 / (2 (C_{s_k,e} \sigma)^2)} ds_{s_k,e} \\
    &=\!\!\! \int_0^\infty \!\!\!\!\left(\!1\!\! -\!\! e\!\!^{-s_{s_k,e}^2 / (2 (C_{d,e} \sigma)^2)}\!\right)\!\!\frac{s_{s_k,e}}{(C_{s_k,e} \sigma)^2}\! e\!^{-s_{s_k,e}^2 / (2 (C_{s_k,e} \sigma)^2)} \!ds_{s_k,e}\\
    &= \int_0^\infty \frac{s_{s_k,e}}{(C_{s_k,e} \sigma)^2} e^{-s_{s_k,e}^2 / (2 (C_{s_k,e} \sigma)^2)} ds_{s_k,e} \\
    &\quad-\!\! \int_0^\infty\!\!\! \frac{s_{s_k,e}}{(C_{s_k,e} \sigma)^2} e^{-s_{s_k,e}^2 \left(\frac{1}{2 (C_{s_k,e} \sigma)^2} + \frac{1}{2 (C_{d,e} \sigma)^2}\right)} ds_{s_k,e}.
\end{aligned}
\end{equation}
Since $\int_0^\infty x e^{-a x^2} dx = \frac{1}{2a}, \quad \forall a > 0$ \cite{zinn2021quantum}, (\ref{eq:inte_ini}) can be rewritten as
\begin{equation} \label{eq:final_P}
\begin{aligned}
    \mathbb{P}(S_{d,e}\! <\! S_{s_k,e}) \!=\!1-\frac{C_{d,e}^2}{C_{d,e}^2 + C_{s_k,e}^2}\!=\! \frac{p_{s_k}m_{s_k,e}^{-2}}{p_{d}m_{t_1,r}^{-2}+p_{s_k}m_{s_k,e}^{-2}}.
\end{aligned}
\end{equation}
Substituting (\ref{eq:final_P}) into (\ref{eq:p1}), we have
\begin{equation}
\begin{aligned}
    \mathbb{P}\left(\epsilon_{s_k,s_k+1}\left(e\right)=s_k\right) 
    &=\!\!\!\!\!\!\! \prod_{d \in \mathcal{D}_{s_k,s_{k+1}}}\!\!\frac{p_{s_k}m_{s_k,e}^{-2}}{p_{d}m_{d,e}^{-2}+p_{s_k}m_{s_k,e}^{-2}}.
\end{aligned}
\end{equation}
Hence, the expectation of the total amount of information leaked to all eavesdroppers is
\begin{equation}
\begin{aligned}
    &\mathbb{E}\left(I_{s_k,s_{k+1}}\right) = \sum_{e \in \mathcal{E}}\mathbb{P}\left(\epsilon_{s_k,s_k+1}\left(e\right)=s_k\right)q_e^{s_k,s_{k+1}} \delta_e^{s_k,s_{k+1}}\left(\boldsymbol{\theta}_k\right) \\
    &=\!\!\sum_{e \in \mathcal{E}}\!\prod_{d \in \mathcal{D}_{s_k,s_{k+1}}}\!\!\!\!\!\frac{p_{s_k}m_{s_k,e}^{-2}}{p_{d}m_{d,e}^{-2}\!+\!p_{s_k}m_{s_k,e}^{-2}}q_e^{s_k,s_{k+1}} \delta_e^{s_k,s_{k+1}}\!\!\left(\boldsymbol{\theta}_k\right).
\end{aligned}
\end{equation}
This completes the proof. \qed

\subsection{Proof of Corollary \ref{col:1}}
To find the optimal $p_{s_k}^*$ and $p_d^*$ to minimize $\mathbb{E}\left(I_{s_k,s_{k+1}}\right)$, we have the following minimization problem
\begin{equation} \label{eq:col_1_mini_ini}
    \min_{\substack{p_{s_k}, p_d}} \quad \sum_{e\in\mathcal{E}} \frac{p_{s_k}\, m_{s_k,e}^{-2}}{p_d\, m_{d,e}^{-2} + p_{s_k}\, m_{s_k,e}^{-2}}q_e^{s_k,s_{k+1}}\delta_e^{s_k,s_{k+1}}\left(\boldsymbol{\theta}_k\right),
\end{equation}
\vspace{-0.5cm}
\begin{align}
    \text{s.t.} \quad & T_{s_k,s_{k+1}}^{\textrm{S}}=\frac{\Gamma\left(\boldsymbol{z}_k\right)}{c_{s_k, s_{k+1}}} \leq B_{\textrm{T}}, \tag{39a} \label{eq:col_1_mini_ini_c1} \\
    & \left(p_{s_k}+p_d\right)T_{s_k,s_{k+1}} \leq B_{\textrm{E}}. \tag{39b} \label{eq:col_1_mini_ini_c2}
\end{align}
Given the data rate $c_{s_k, s_{k+1}}$ in (\ref{eq:data_rate}), (\ref{eq:col_1_mini_ini_c1}) can be written as
\begin{equation} \label{eq:col_1_t2}
    \frac{p_{s_k}m^{-2}_{s_k,s_{k+1}}o}{p_dm_{s_k,d}^{-2}o+BN_0}\geq 2^{\frac{\Gamma\left(\boldsymbol{z}_k\right)}{B_{\textrm{T}}B}}-1.
\end{equation}
Therefore, (\ref{eq:col_1_t2}) can be simplified as
$\xi_0p_{s_k} - \xi_dp_d \geq \chi_1$,
where $\xi_0=m^{-2}_{s_k,s_{k+1}}o$, $\xi_d = {m^{-2}_{s_k,d}o}{\left(2^{\frac{\Gamma\left(\boldsymbol{z}_k\right)}{B_{\textrm{T}}B}}-1\right)}$, and $\chi_1=BN_0\left(2^{\frac{\Gamma\left(\boldsymbol{z}_k\right)}{B_{\textrm{T}}B}}-1\right)$.
From Theorem \ref{the:1}, we see that $\mathbb{E}\left(I_{s_k,s_{k+1}}\right)$ is a monotonically increasing function of $p_{s_k}$ and a decreasing function of $p_d$. Hence, $p_{s_k}^*$ and $p_d^*$ must satisfy the following equations:
\begin{equation} \label{eq:col_1_cst}
    \xi_0p_{s_k}^* - \xi_dp_d^* = \chi_1, \quad p_{s_k}^*+p_d^* = \chi_2,
\end{equation}
where $\chi_2 = \frac{B_{\textrm{E}}}{B_{\textrm{T}}}$.
From (\ref{eq:col_1_cst}), $p_{s_k}^*$ and $p_d^*$ can be derived as
\begin{equation} \label{eq:col_1_sv1}
    p_{s_k}^* = \frac{\chi_1 + \xi_d\chi_2}{\xi_0 + \xi_d}, \quad p_d^* = \frac{\xi_0\chi_2 - \chi_1}{\xi_0 + \xi_d}.
\end{equation}
This completes the proof. \qed

\subsection{Proof of Corollary \ref{col:2}}
To find the optimal $p_{s_k}^*$ and $p_d^*, \forall d \in \mathcal{D}_{s_k,s_{k+1}}$,  we have the following minimization problem
\begin{equation} \label{eq:col_2_mini_ini}
    \min_{\substack{p_{s_k}, p_d}} \quad q_e^{s_k,s_{k+1}}\delta_e^{s_k,s_{k+1}}\left(\boldsymbol{\theta}_k\right) \!\!\!\!\!\!\! \prod_{d \in \mathcal{D}_{s_k, s_{k+1}}}\!\!\!\!\! \left(\frac{p_{s_k}\, m_{s_k,e}^{-2}}{p_d\, m_{d,e}^{-2} + p_{s_k}\, m_{s_k,e}^{-2}}\right),
\end{equation}
\vspace{-0.5cm}
\begin{align}
    \text{s.t.} \quad & T_{s_k,s_{k+1}}^{\textrm{S}}=\frac{\Gamma\left(\boldsymbol{z}_k\right)}{c_{s_k, s_{k+1}}} \leq B_{\textrm{T}}, \tag{43a} \label{eq:col_2_mini_ini_c1} \\
    & \left(p_{s_k}+\!\!\!\!\sum_{d \in \mathcal{D}_{s_k,s_{k+1}}}\!\!\!\!p_d\right)T_{s_k,s_{k+1}} \leq B_{\textrm{E}}, \tag{43b} \label{eq:col_2_mini_ini_c2}
\end{align}
With the assumption in Corollary \ref{col:1}, (\ref{eq:col_1_mini_ini_c1}) can be written as
\begin{equation} \label{eq:col_2_t1}
    \frac{p_{s_k}m^{-2}_{s_k,s_{k+1}}o}{BN_0}\geq 2^{\frac{\Gamma\left(\boldsymbol{z}_k\right)}{B_{\textrm{T}}B}}-1.
\end{equation}
Therefore, (\ref{eq:col_2_t1}) can be simplified as $\xi_0 p_{s_k} \geq \chi_1$.
As $\mathbb{E}\left(I_{s_k,s_{k+1}}\right)$ is a monotonically increasing function of $p_{s_k}$ and a decreasing function of $p_d$, $p_{s_k}^*$ and $p_d^*$ must satisfy
\begin{equation} \label{eq:col_2_cst}
    \xi_0p_{s_k}^* = \chi_1, \quad p_{s_k}^*+\!\!\!\!\sum_{d \in \mathcal{D}_{s_k, s_{k+1}}}\!\!\!\! p_d^* = \chi_2.
\end{equation}
From (\ref{eq:col_2_cst}), we have $p_{s_k}^* = \frac{\chi_1}{\xi_0}$, and
$\sum_{d \in \mathcal{D}_{s_k, s_{k+1}}}\!\!\!\! p_d^* = \chi_2-\frac{\chi_1}{\xi_0}.$
Hence, the minimization problem (\ref{eq:col_2_mini_ini}) can be rewritten as
\begin{equation} \label{eq:col_2_sim}
    \min_{\{p_d\}} \quad q_e^{s_k,s_{k+1}}\delta_e^{s_k,s_{k+1}}\left(\boldsymbol{\theta}_k\right) \!\!\!\!\!\!\!\prod_{d \in \mathcal{D}_{s_k,s_{k+1}}}\!\!\! \frac{\frac{\chi_1}{\xi_0} m_{s_k,e}^{-2}}{p_d m_{d,e}^{-2} + \frac{\chi_1}{\xi_0} m_{s_k,e}^{-2}}.
\end{equation}
\vspace{-0.4cm}
\begin{align}
    \text{s.t.} \quad &
    \sum_{d \in \mathcal{D}_{s_k, s_{k+1}}} p_d = \chi_2 - \frac{\chi_1}{\xi_0}. \label{eq:lcol_2_sim_c1} \tag{46a}
\end{align}
Since $\frac{\frac{\chi_1}{\xi_0} m_{s_k,e}^{-2}}{p_d m_{d,e}^{-2} + \frac{\chi_1}{\xi_0} m_{s_k,e}^{-2}}, \forall d \in \mathcal{D}_{s_k,s_{k+1}}$ is a strictly decreasing function of $p_d$, $\mathbb{E}\left(I_{s_k,s_{k+1}}\right)$ is minimized when all $\frac{\frac{\chi_1}{\xi_0} m_{s_k,e}^{-2}}{p_d m_{d,e}^{-2} + \frac{\chi_1}{\xi_0} m_{s_k,e}^{-2}}, \forall d \in \mathcal{D}_{s_k,s_{k+1}}$ are equal \cite{boyd2004convex}. Let
\begin{equation}
    y_d = \frac{\frac{\chi_1}{\xi_0} m_{s_k,e}^{-2}}{p_d^* m_{d,e}^{-2} + \frac{\chi_1}{\xi_0} m_{s_k,e}^{-2}}= y, \quad \forall d \in \mathcal{D}_{s_k, s_{k+1}} .
\end{equation}
We have
\begin{equation} \label{eq:pd*}
    p_d^* = {\chi_1 m_{s_k,e}^{-2}(1 - y)}/{\xi_0 m_{d,e}^{-2} y}.
\end{equation}
Substituting (\ref{eq:pd*}) into constraint (\ref{eq:lcol_2_sim_c1}), we have
\begin{equation} \label{eq:y}
    y = \frac{\chi_1 m_{s_k,e}^{-2} \sum_{d \in \mathcal{D}_{s_k,s_{k+1}}} m_{d,e}^2}{\chi_1 m_{s_k,e}^{-2} \sum_{d \in \mathcal{D}_{s_k,s_{k+1}}} m_{d,e}^2 + \xi_0 \chi_2 - \chi_1}.
\end{equation}
Substituting (\ref{eq:y}) into (\ref{eq:pd*}), we have
\begin{equation}
    p_d^* = \frac{m^{-2}_{s_k,s_{k+1}}o\frac{B_{\textrm{E}}}{B_{\textrm{T}}}-BN_0\left(2^{\frac{\Gamma\left(\boldsymbol{z}_k\right)}{B_{\textrm{T}}B}}-1\right)}{m^{-2}_{s_k,s_{k+1}}om_{d,e}^{-2}\sum_{d \in \mathcal{D}_{s_k, s_{k+1}}}m_{d,e}^2}.
\end{equation}
This completes the proof. \qed

\bibliography{ref}
\bibliographystyle{IEEEtran}

\end{document}